\newtheorem{prop}{Proposition}
\newtheorem{define}{Definition}
\newtheorem{lemma}{Lemma}
\title{IID-GAN: an IID Sampling Perspective for Regularizing Mode Collapse}
\author{
Yang Li$^*$,\ Liangliang Shi\thanks{Equal contribution. $\dag$ Correspondence author.}\And \ Junchi Yan$^\dag$
\affiliations
Department of Computer Science and Engineering,\\
MoE Key Lab of Artificial Intelligence, Shanghai Jiao Tong University\\
\emails
\{yanglily, shiliangliang, yanjunchi\}@sjtu.edu.cn
}
\begin{document}

\maketitle

\begin{abstract}
Despite its success, generative adversarial networks (GANs) still suffer from mode collapse, i.e., the generator can only map latent variables to a partial set of modes in the target distribution. In this paper, we analyze and seek to regularize this issue with an independent and identically distributed (IID) sampling perspective and emphasize that holding the IID property referring to the target distribution for generation can naturally avoid mode collapse. This is based on the basic IID assumption for real data in machine learning. However, though the source samples $\{\mathbf{z}\}$ obey IID, the generations $\{G(\mathbf{z})\}$ may not necessarily be IID sampling from the target distribution. Based on this observation, considering a necessary condition of IID generation that the inverse samples from target data should also be IID in the source distribution, we propose a new loss to encourage the closeness between inverse samples of real data and the Gaussian source in  latent space to regularize the generation to be IID from the target distribution. Experiments on both synthetic and real-world data show the effectiveness of our model.
\end{abstract}

\section{Introduction}
\label{sec:idea}
For generative models, the training data (in target space) is often assumed to be IID sampled from an unknown implicit distribution. Deep generative models often try to construct a mapping from a known distribution in source space (e.g. Gaussian distribution) to the implicit target distribution. Among popular generative schemes, e.g. Variational Auto-Encoder (VAEs)~\cite{VAEICLR14}, Generative Flow models~\cite{FlowICML15} and Generative Adversarial Networks (GANs)~\cite{goodfellow2014generative}, GAN is widely recognized for its excellence in generating images with high resolution. Mode collapse is one of the standing issues in GAN, a phenomenon that the generator tends to get stuck in a subset of modes while excluding other parts of the target distribution~\cite{CVPR2020diverse_SelfConditioned,DSGANiclr2019}, leading to poor generation diversity.

Efforts have been made to address mode collapse along two branches: 1) Achieve a better convergence between the generated distribution and the target (real data) distribution~\cite{GulrajaniNIPS17,unrollGAN}. Though the distribution convergence is recognized as the ultimate goal of generation~\cite{goodfellow2014generative}, the implicitness of the two distributions makes the desire can only be achieved by utilizing the sampling data. Thus the goal of distribution convergence can be somehow vague and imprecise. 2) Penalize the similarity of the generated images~\cite{GDPP2019ICML,MSGAN} or apply multiple generator/discriminators ~\cite{LiuCGAN16,NguyenNIPS17}. These methods promote the generation diversity in a more direct manner but often lack rigorous theoretical guarantees.

Existing mainstream GAN methods~\cite{goodfellow2014generative,ArjovskyICML17,GulrajaniNIPS17,miyato2018spectral} focus on achieving the identical generated distribution as the real one. \cite{goodfellow2014generative} shows that if given enough capacity and training time, the generated distribution converges to real distribution through GAN's optimization.  However, due to the limited capacity of the networks (e.g. catastrophic forgetting) and the implicitness of the two distributions, the convergence only holds in theory. With the short-term guidance in training, most of the methods~\cite{goodfellow2014generative,ArjovskyICML17,GulrajaniNIPS17,miyato2018spectral} tend to \textbf{sample-wise} boost the generation quality i.e. increase the probability density of the generated sample obeying the target distribution. Taking GAN as an example, discriminator $D$ distinguishes whether every generation is sampled from the identical target distribution, and generator $G$ tends to generate samples with high discriminative confidence of the current $D$. While in VAE-based or Flow-based approaches~\cite{VAEICLR14,FlowICML15}, log-likelihood is applied to increase the density of every generation. These methods rarely study the coupled relation between the generated samples to constitute a distribution, and each step the optimized sample is pulled to the center of the distribution to maintain high probability density, while the characteristics of the overall distribution e.g. variance are not explicitly controlled. This leads them to merely improve the quality of the individual generated samples, trivializing to maintain diversity as real data does. This may stem from the dilemma that a single sample can hardly estimate a distribution through nets, unless the nets possess unrestricted capability and do not exhibit catastrophic forgetting~\cite{mccloskey1989catastrophic}.

In this paper, rather than study the relation between distributions, we directly optimize the sampling relation between \textbf{generated sample batches} and the target distribution. We first make a basic yet under-exploited (in GAN literature) observation that datasets for generation are assumed to be independent and identically distributed (IID) sampled from an unknown target distribution (i.e. real distribution). We aim to achieve that the generated samples are \textbf{independently} sampled from the \textbf{identical} target distribution. If the latents $\{\mathbf{z}\}$ are IID sampled from the source, it is guaranteed that the generations $\{G(\mathbf{z})\}$ are IID samples from the generated distribution, but not necessarily IID from the target. Note the independence here is considered from the perspective of statistical test, which needs consistency over samples $\{\mathbf{x}_1,\mathbf{x}_2,\cdots,\mathbf{x}_n \}$ and the target distribution (or between the latents $\{\mathbf{z}_1,\mathbf{z}_2,\cdots,\mathbf{z}_n \}$ and source distribution). Sample batches can contain relatively more complete features characterizing a distribution, improving optimization efficiency.

To address mode collapse and get IID generation, we first define \emph{mode completeness} as the key requirement of the \emph{ideal} mapping from the source to the target. As the target distribution is implicit and we can only obtain its finite IID samples, we propose a weaker/necessary condition\footnote{Note that it will become a necessary and sufficient condition if there are an infinite amount of real data.} for IID generation based on an inverse mapping: if the real data are IID samples from target distribution, then their inverses in the source space are also IID. To achieve IID property, a common and straightforward idea is to drive the distribution of the overall inverse samples close to a standard Gaussian by certain measures. This idea differs from existing inversion-based methods~\cite{VEEGANnips17,biganiclr2017,yu2019vaegan} which learn the relationship between the individual latent sample  $\mathbf{z}$ and real data $\mathbf{x}$ through the discriminator, yet the relationship among $\{\mathbf{z}\}$ is not fully exploited. The highlights of the paper are:

{1) We take an IID perspective to mode collapse in GAN. The requirement for an ideal generator is proposed that it should satisfy the so-called mode completeness by Def. \ref{def:mode_complete}.} Limited by the finiteness of real samples, we resort to a necessary condition of mode completeness in Prop.~\ref{prop:prop1}, which requires the IID property of the inverse samples from the real data.

2) Guided by Prop.~\ref{prop:prop1}, a regularizer is devised to avoid mode collapse. We enforce the inverse sample batch to be distributed close to a standard Gaussian by Wasserstein distance, termed as Gaussian consistency loss (see Sec.~\ref{sec:gaussian_consistency_loss}). QQ-plot, Shapiro–Wilk and Kolmogorov-Smirnov statistics are used to test the IID property of the inverse Gaussian samples.

3) We show that IID-GAN outperforms baselines by different metrics on synthetic data w.r.t. the number of covered modes, quality and reverse KL divergence. Our regularization technique also performs competitively on natural images. Unsupervised disentanglement feature learning and conditional generation are also investigated. 

\section{Related Work}

Since its debut~\cite{goodfellow2014generative}, subsequent works of GANs have emerged to improve training stability and generation quality. However, issues still remain for GANs, and mode collapse is one of the most common challenges. In this section, we introduce methods that are directly or indirectly dedicated to address mode collapse.

\textbf{Improving training behavior.} The promotion of distribution convergence can naturally avoid mode collapse. Unrolled GAN~\cite{unrollGAN} presents a surrogate objective to train the generator, along with an unrolled optimization of the discriminator to improve training stability. Wasserstein GANs~\cite{ArjovskyICML17} and its variants e.g. \cite{GulrajaniNIPS17} modify the optimization objective to stabilize training. Although these methods all claim to improve the convergence between the generated and real distribution, the convergence focus more on generation quality that they tend to align every generated sample closer to the real distribution but focus less on the inter-sample relation.

\textbf{Enforcing to capture diverse modes.} Many other methods address mode collapse by penalizing similarity. GDPP~\cite{GDPP2019ICML} applies the determinantal point process theory, which imposes a penalty on the discriminator to enforce the convergence of the covariance matrix between the features of generated samples and real data. BuresGAN~\cite{de2021bures} matches the discriminator's last layer output through Bures distance. AdvLatGAN~\cite{li2022improving} mines sample pairs that tend to collapse and regularize them to maintain diversity. All these methods are concerned with the relationship among generated results. However, these methods lack stability guarantees for diverse generation, as they ignore that the generated data shall statistically maintain the same diversity as the real distribution.

\textbf{Multiple generators and discriminators.} Involving multiple generators to achieve wider coverage over the real distribution can also benefit diverse generation. In \cite{LiuCGAN16}, two coupled generators are trained with parameter sharing to learn the real distribution. The multi-agent system MAD-GAN~\cite{GhoshCVPR18} involves multiple generators along with one discriminator, and it implicitly encourages each generator to learn a partial set of modes. On the other hand, multiple discriminators are used in \cite{DurugkarICLR17} as an ensemble. AMAT~\cite{mangalam2021overcoming} introduce a novel training procedure that adaptively spawns additional discriminators to remember previous modes of generation. These methods directly seek to utilize more networks to capture more modes, but omit to reach the essence of mode collapse.

\textbf{Learn the representations by inverse mapping.}    BiGAN~\cite{biganiclr2017}, VEEGAN~\cite{VEEGANnips17}, VAE-based models~\cite{VAEICLR14} and recently proposed MGGAN~\cite{bang2021mggan}, Dist-GAN~\cite{tran2018dist} involve encoding networks (inverse mapping) of the generator to encourage the convergence between the inverse distribution of real data and source distribution. Flow-GAN~\cite{grover2018flow} utilizes a flow-based generator that allows for invertibility. However, it requires very special network design that can significantly limit the network's expressive power and inference speed.

However, similar to the previous discussion about methods improving training behavior, the inverse mapping procedure ignores the IID requirement, i.e., they merely sample-wise boost the generation quality (e.g. the probability of the sample locating in real data distribution) and fail to explicitly control the coupled relation of samples to constitute a distribution. For example, VEEGAN seeks to discriminate real/fake $(\mathbf{z},\mathbf{x})$ pairs by the discriminator while the loss is enforced to each pair. As the networks exhibit limited capacity e.g. catastrophic forgetting, the discriminator cannot accurately characterize the implicit distribution through a sequence of single sample learning, which hinders the convergence between the inverse and source distributions as shown in Fig.~\ref{fig:Generationsamples}. Under the guidance of IID generation, we take one step further to reveal the impact of IID informed inverse mapping on solving mode collapse. Specifically, Prop.~\ref{prop:prop1} calls for the entire inverse sample batch $\{G^{-1}(\mathbf{x})\}$ to be IID sampled from Gaussian, which requires considering the entire sample set, regularizing the optimization with more samples.

\section{Preliminaries and Motivation}\label{sec:sec3}
Fig.~\ref{sfig:ModeC} presents a basic interpretation of mode collapse on top of the mapping between the source and target spaces. To address mode collapse, Fig.~\ref{sfig:ModeCMap} shows a probability measure requirement for source and target space samples (i.e. probability measures for set-set pairs in the two spaces should be equal). We first define \emph{mode completeness} corresponding to the ideal generative mapping between two probability measures:
\begin{define}\label{def:mode_complete}
\emph{\textbf{(Mode Completeness)}} The probability measures $\mathbf{\alpha}$ and $\mathbf{\beta}$ are defined in the source space $\mathcal{A}$ and the target space $\mathcal{B}$, respectively. The generative mapping  $G:\mathcal{A}\to\mathcal{B}$ is defined as mode completeness from $\mathbf{\alpha}$ to $\mathbf{\beta}$ if $G$ satisfies:
\begin{equation}\label{eq:Completeness}
    \mathbf{\beta}(\mathcal{S}) = \mathbf{\alpha}(\mathbf{z}\in \mathcal{A}:G(\mathbf{z})\in \mathcal{S}\}),
\end{equation}
where $\mathcal{S}\subset\mathcal{B}$ is an arbitrary set in the the target space.
\end{define}

\begin{figure}[tb!]
  \centering
  \begin{subfigure}{0.65\linewidth}
    \includegraphics[width=1\linewidth,angle=0]{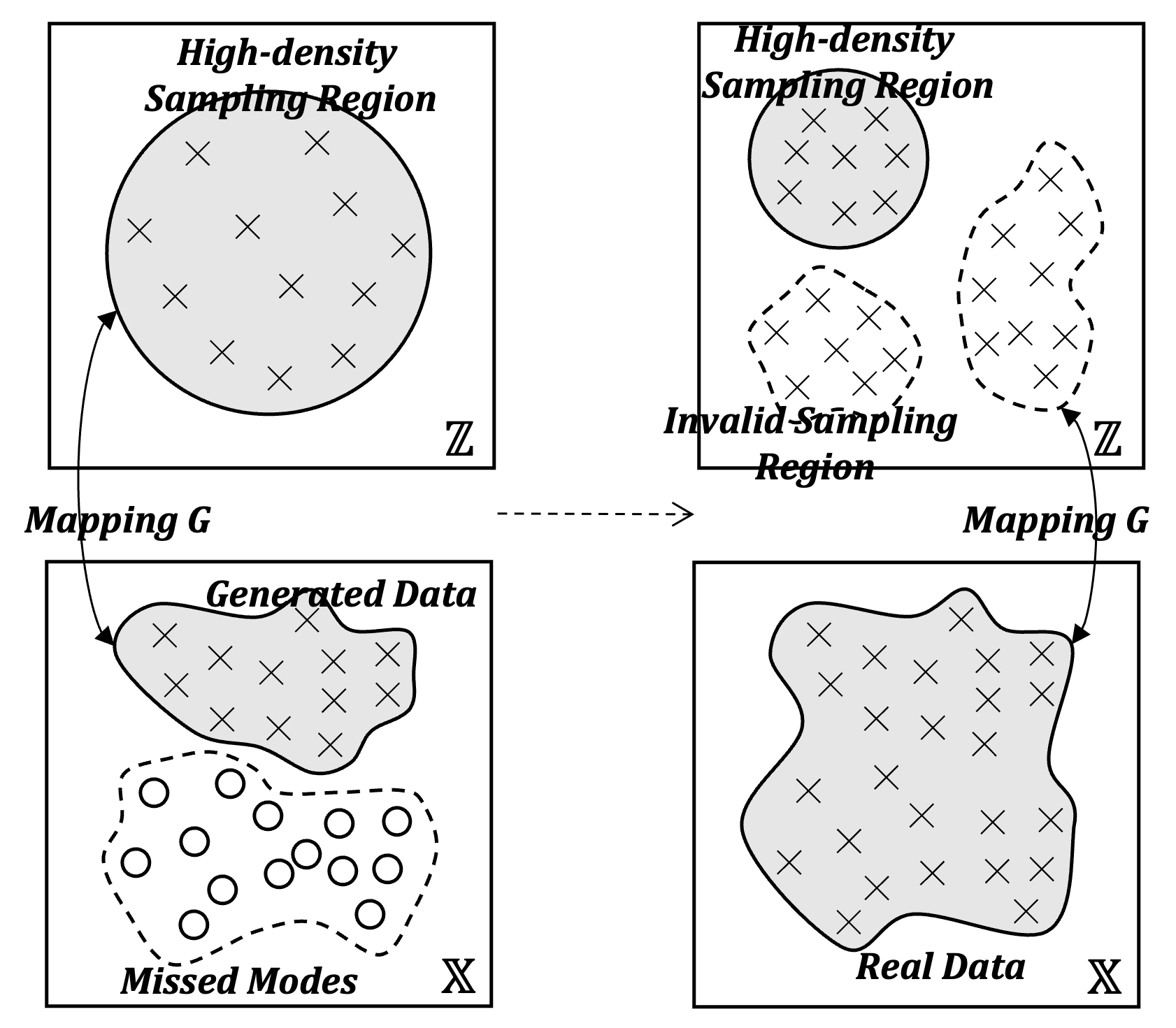}
    \caption{Mode collapse caused by real data's inverses outside high-density sampling region.}\label{sfig:ModeC}
  \end{subfigure}
  \quad
  \begin{subfigure}{0.29\linewidth}
    \includegraphics[width=0.91\linewidth,angle=0]{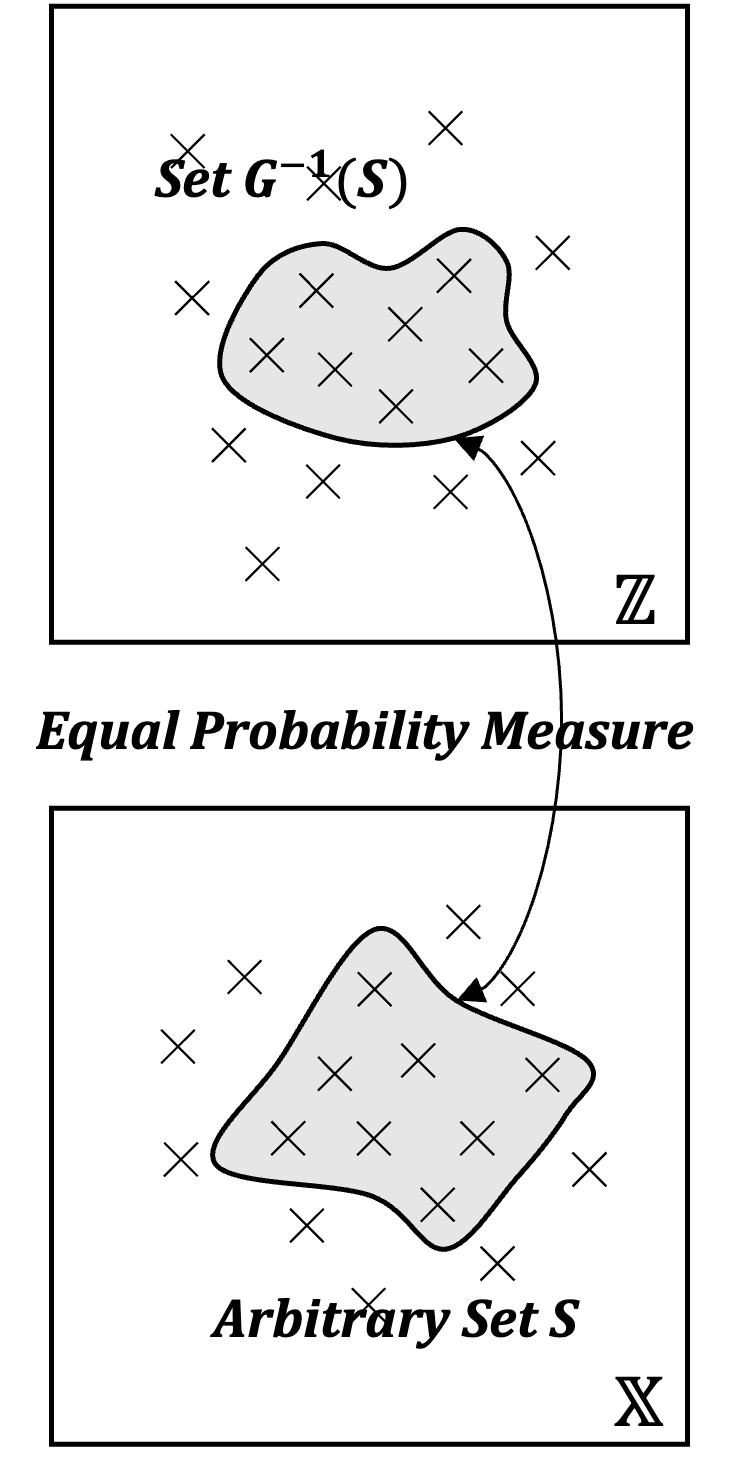}
    \caption{Mode completeness w/ equal probability measures}\label{sfig:ModeCMap}
  \end{subfigure}
  \caption{Mapping from source to target: (a) Left: high-density samples can only be mapped to part of real data i.e. mode collapse; Right: only part of real samples' inverses lie in high-density sampling region, and the inverses outside the region can hardly be sampled, leading to mode collapse. (b) Assuming the existence of the inverse of mapping $G(\cdot)$, to solve mode collapse and get IID generation, the probability measures of the set $G^{-1}(S)$ and $S$ should be equal for any set $S$ in target space. See Fig.~\ref{fig:Generationsamples} for experimental examples.}
  \label{fig:short}
\end{figure}

Eq.~\ref{eq:Completeness} defines the same operation of the push-forward operator~\cite{peyre2019computational} $\beta=G_{\#}\alpha$ in optimal transportation. So the mode completeness can also be interpreted as regularizing the mapping $G$ such that $\beta=G_{\#}\alpha$, and the operator $G_{\#}$ implies that $G$ pushes forward the mass of $\alpha$ to $\beta$~\cite{peyre2019computational}. Based on Def.~\ref{def:mode_complete}, mode collapse can be naturally avoided because the values of the probability measures given the corresponding sets based on $G$ are equal (i.e. $\alpha(\{\mathbf{z}_i\})=\beta(\{T(\mathbf{z}_i)\})$). Then given the equal probability measures, when ${\mathbf{z}_i}$ are IID sampled from $\alpha$, the generations $\{G(\mathbf{z}_i)\}$ are IID samples from $\beta$.

Mode completeness $\beta=G_{\#}\alpha$ can hardly be completely achieved since $\beta$ is complex and implicit. Considering the basic assumption that the real training data are IID sampled from $\beta$, if we assume the existence of $G^{-1}: \mathcal{B}\to\mathcal{A}$ such that $z=G^{-1}(G(z))$ and $x=G(G^{-1}(x))$ for any $z\in\mathcal{A}$ and $\mathbf{x}\in\mathcal{B}$, then we can obtain a necessary condition for mode completeness. Such a necessary condition for IID generation is formalized Proposition~\ref{prop:prop1}:
\begin{prop}\label{prop:prop1}
\emph{\textbf{ (IID for Inverse of Target Samples)}}
If the mapping $G$ satisfies mode completeness from the source probability measure $\alpha$ to the target measure $\beta$ and its inverse $G^{-1}$ exists, then given IID samples $\{\mathbf{x}^{(i)}\}_{i=1}^n$ from $\beta$, their inverses $\{G^{-1}(\mathbf{x}^{(i)})\}_{i=1}^n$ can be viewed as IID samples from $\alpha$.
\end{prop}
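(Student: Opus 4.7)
The statement splits into two parts: (i) each $G^{-1}(\mathbf{x}^{(i)})$ is marginally distributed according to $\alpha$, and (ii) the collection $\{G^{-1}(\mathbf{x}^{(i)})\}_{i=1}^n$ is mutually independent. My plan is to handle these separately; the identically-distributed part carries essentially all of the content, while independence reduces to a standard fact.

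For the identical-distribution part, I would fix an arbitrary measurable set $T \subset \mathcal{A}$ and compute $P(G^{-1}(\mathbf{x}^{(i)}) \in T)$. Using the assumed two-sided invertibility of $G$, I would rewrite the event $\{G^{-1}(\mathbf{x}^{(i)}) \in T\}$ as $\{\mathbf{x}^{(i)} \in G(T)\}$, since applying $G$ is a bijection on events. Because $\mathbf{x}^{(i)} \sim \beta$, this probability equals $\beta(G(T))$. I then invoke Def.~\ref{def:mode_complete} with the specific choice $\mathcal{S} = G(T)$ to obtain $\beta(G(T)) = \alpha(\{\mathbf{z}\in\mathcal{A} : G(\mathbf{z})\in G(T)\})$, and use injectivity of $G$ to identify the right-hand preimage with $T$ itself. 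Assembling these steps gives $P(G^{-1}(\mathbf{x}^{(i)}) \in T) = \alpha(T)$ for every measurable $T$, which is exactly the statement that $G^{-1}(\mathbf{x}^{(i)}) \sim \alpha$. This is really just rewriting the pushforward relation $\beta = G_{\#}\alpha$ as $\alpha = (G^{-1})_{\#}\beta$.

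For independence, I would invoke the standard measure-theoretic fact that a fixed deterministic measurable map applied coordinatewise to independent random variables preserves mutual independence: since $\mathbf{x}^{(1)}, \ldots, \mathbf{x}^{(n)}$ are independent and $G^{-1}$ is a single fixed Borel map, the joint law of $(G^{-1}(\mathbf{x}^{(1)}), \ldots, G^{-1}(\mathbf{x}^{(n)}))$ factorizes as a product. Together with part (i), this delivers the IID conclusion.

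The main obstacle I anticipate is not algebraic but bookkeeping: the argument relies on the set identities $G^{-1}(G(T)) = T$ and $G(G^{-1}(S)) = S$, which require the \emph{two-sided} inverse assumed in the hypothesis, and on implicit measurability of $G$ and $G^{-1}$ so that all events in question lie in the relevant $\sigma$-algebras. A minor subtlety is that Def.~\ref{def:mode_complete} quantifies over arbitrary $\mathcal{S}\subset\mathcal{B}$, so when I instantiate $\mathcal{S} = G(T)$ I should note that this is a valid subset of the target space. Beyond these points the proof is a direct unraveling of mode completeness in the inverse direction, and I do not expect any further difficulty.
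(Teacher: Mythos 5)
Your proof is correct and rests on the same underlying identity as the paper's --- the pushforward relation $\beta(\mathcal{S})=\alpha(G^{-1}(\mathcal{S}))$ read in reverse --- but the packaging is genuinely different. The paper first proves a general lemma (its Proposition~2) about the joint product measure: it takes $\rho=\beta^{\otimes n}$ on $\mathcal{B}^n$, pulls it back through the coordinatewise map $\tilde{T}=(T,\dots,T)$, and shows that the resulting measure on $\mathcal{A}^n$ factorizes as $\prod_i\alpha(T^{-1}(\mathcal{S}_i))$ on product sets, establishing the marginal law and independence in one shot; it then derives Proposition~1 by instantiating the sets $\mathcal{S}_i$ as the singletons $\{x_i\}$. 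You instead split the claim into (i) the marginal computation $P(G^{-1}(\mathbf{x})\in T)=\beta(G(T))=\alpha(T)$ for an arbitrary measurable $T\subset\mathcal{A}$, and (ii) the standard fact that a fixed deterministic measurable map applied coordinatewise preserves mutual independence. Your decomposition buys something concrete: the paper's specialization to singleton sets is vacuous whenever $\beta$ is atomless (it reduces to $0=0$), whereas your argument quantifies over all measurable sets and therefore actually delivers the conclusion; conversely, the paper's product-measure formulation states the stronger ``joint law equals $\alpha^{\otimes n}$'' conclusion explicitly. The bookkeeping points you flag (two-sided invertibility so that $G^{-1}(G(T))=T$, and measurability of $G$ and $G^{-1}$) are exactly the hypotheses the paper also uses implicitly, so nothing essential is missing from your plan.
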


To prove Proposition~\ref{prop:prop1}, we first provide a proof for Lemma~\ref{prop:prop2}, which represents a more general scenario.

\begin{lemma}\label{prop:prop2}
Assume the existence of mapping $T$ which satisfies $T_{\#}\alpha=\beta$ and the existence of its inverse $T^{-1}$. Let  $\rho$ be the joint probability measure of $n$ independent copies of $\beta$, denoted as $\beta_1,\cdots,\beta_n$ and $\pi$ is the joint probability measure of $n$ independent copies of $\alpha$, denoted as $\alpha_1,\cdots,\alpha_n$. Then we have $\Tilde{T}_{\#}\pi=\rho$, where $\Tilde{T}$ is the concatenation of $n$ mapping $T$s that satisfies $\Tilde{T}(x_1,\cdots,x_n)=[T(x_1),\cdots,T(x_n)]$. Then we can obtain that $\pi$ is the joint probability measure with  $n$ \textbf{independent} probability measure $\alpha_1,\cdots,\alpha_n$.
\end{lemma}
\begin{proof}
Given $n$ measurable set $\mathcal{S}_i\subset\mathcal{B}$ and $\mathcal{S}=\mathcal{S}_1\times\mathcal{S}_2\times\cdots\times\mathcal{S}_n$, with the independence between $\{\beta_i\}_{i=1}^n$, we obtain:
\begin{equation}
     \rho(\mathcal{S})=\beta_1(\mathcal{S}_1)\beta_2(\mathcal{S}_2)\cdots\beta_n(\mathcal{S}_n)
\end{equation}
Since $\Tilde{T}_{\#}\pi=\rho$ and $T_{\#}\alpha=\beta$, we know that $\rho(\mathcal{S})=\pi(\Tilde{T}^{-1}(\mathcal{S}))$ and $\beta(\mathcal{S}_i)=\alpha(T^{-1}(\mathcal{S}_i))$. Then we have:
\begin{equation}\label{eq:joint}
     \pi(\Tilde{T}^{-1}(\mathcal{S}))=\alpha_1(T^{-1}(\mathcal{S}_1))\cdots\alpha_n(T^{-1}(\mathcal{S}_n))
\end{equation}
which indicates that $\pi$ is the joint probability measure with \textbf{independent} $\alpha_1,\cdots,\alpha_n$.
\end{proof}

The proof of Proposition~\ref{prop:prop1} can be easily obtained from  Lemma~\ref{prop:prop2} by setting $n$ measurable sets as $\mathcal{S}_1=\{x_1\}, \cdots, \mathcal{S}_n=\{x_n\}$  where $x_1, \cdots, x_n$ are $n$ IID samples from target distribution. According to Eq.~\ref{eq:joint}, we obtain
\begin{equation}
    \alpha(T^{-1}(x_1))\cdots\alpha(T^{-1}(x_n))=\pi(\Tilde{T}^{-1}(\{x_1,\cdots,x_n\}))
\end{equation}
which implies that $\{T^{-1}(x^{(i)})\}_{i=1}^n$ can be viewed as $n$ independent samples from source distribution.

Since it is commonly assumed that the real data $\{\mathbf{x}^{(i)}\}_{i=1}^n$ are independently sampled from an unknown distribution $\beta$, their inverses $\{G^{-1}(\mathbf{x}^{(i)})\}_{i=1}^n$ can be viewed as IID samples from a known distribution $\alpha$. To satisfy mode completeness, we regularize the inverse samples of real data to be closer to the IID samples from $\alpha$. Since we can hardly obtain a strict inverse mapping $G^{-1}$, we calls for an approximation of $G^{-1}$ as $F$, which is obtained by penalizing:
\begin{equation}
\label{eq:approx_F}
\begin{split}
    \mathbf{z}=F(G(\mathbf{z}))\quad \quad \mathbf{x}=G(F(\mathbf{x})).
\end{split}
\end{equation}

\section{The Proposed Approach}
\label{sec:approach}
Our loss mainly consists of three parts as shown in Fig.~\ref{fig:Structure}.
\subsection{Adversarial Learning Term: GAN Loss}
The vanilla GAN~\cite{goodfellow2014generative} consists of a discriminator $D:\mathcal{R}^d\rightarrow \mathcal{R}$ and a generator $G:\mathcal{R}^M\rightarrow \mathcal{R}^d$, which are typically embodied by deep neural networks. Given the empirical distribution $p(x)$, $D$ distinguishes whether the generated sample is from real data, while $G$ models the mapping from Gaussian sample $\mathbf{z}$ to a target space sample. The objective $V(G,D)$ is optimized for the discriminator and generator by solving the min-max:
\begin{equation}\label{eq:GAN}
    E_{\mathbf{x}\sim p_r(\mathbf{x})}\left[\log(D(\mathbf{x}))\right]+E_{\mathbf{z}\sim p(\mathbf{z})}\left[\log(1- D(G(\mathbf{z})))\right]
\end{equation}
The first term denotes the probability expectation of $\mathbf{x}$ coming from the real data distribution $p(x)$ and the second term involves the input distribution $p(\mathbf{z})$, which is embodied in this paper as a standard multi-dimensional ($M$-D) Gaussian distribution $\mathcal{N}(\mathbf{z};\mathbf{0},\mathbf{I})$. $\mathbf{I}\in \mathcal{R}^{M\times M}$ denotes the identity matrix.

%
\begin{figure*}[tb!]
\centering
\begin{minipage}[c]{0.65\linewidth}
	\centering	
	\begin{subfigure}{0.99\linewidth}
	\centering
    {\includegraphics[width=0.65\linewidth]{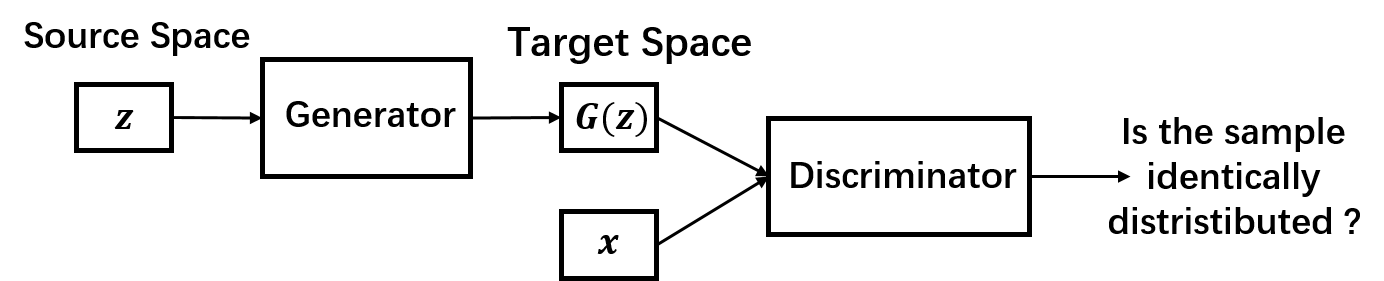}\label{sfig:GANlos}}
    \caption{Adversarial loss to generate realistic samples from the identical target distribution}
  \end{subfigure}
      \begin{subfigure}{0.99\linewidth}
      \centering
    {\includegraphics[width=0.65\linewidth]{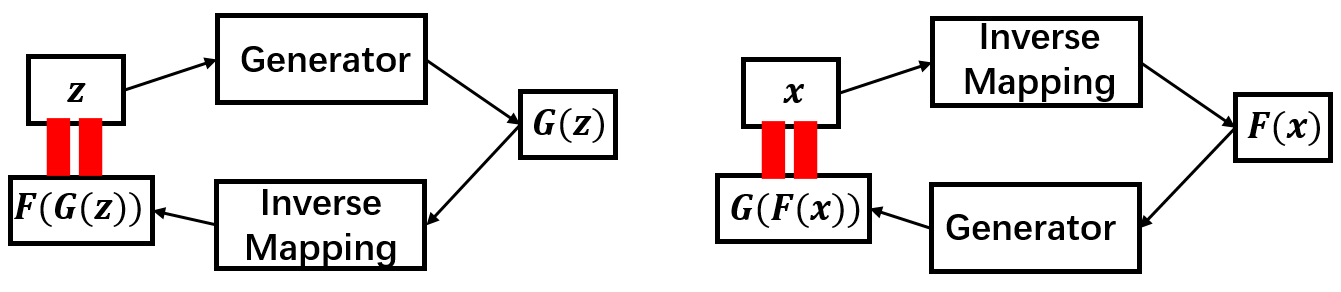}\label{sfig:inverse}}
    \caption{Cycle consistency to regularize the inverse mapping}
  \end{subfigure}
\end{minipage}
\begin{minipage}[c]{0.16\linewidth}
	\centering	
  \begin{subfigure}{0.99\linewidth}
  \centering
    {\includegraphics[width=0.67\linewidth]{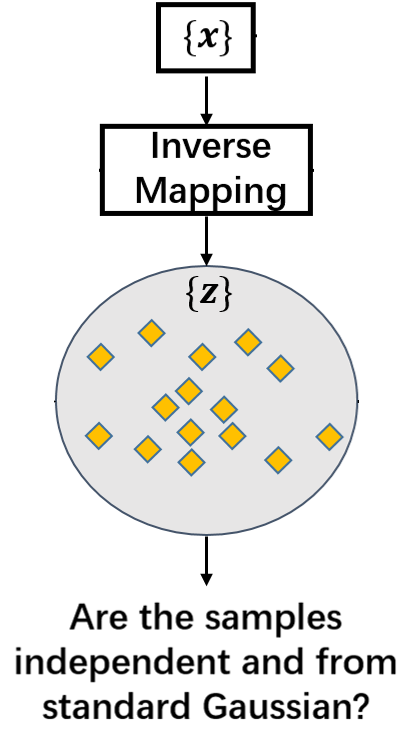}\label{sfig:Gauloss}}
    \caption{ Loss to guarantee independent sampling}
  \end{subfigure}
\end{minipage}
\caption{Generator $G$ maps $M$-D Gaussian samples to targets and $F$ inverts the target back to a source sample obeying $M$-D Gaussian.}\label{fig:Structure}
\end{figure*}

\subsection{Reconstruction Term: Cycle-Consistency Loss}
\label{sec:cycle_consistency_loss}
Prop.~\ref{prop:prop1} calls for the existence of the generator's inverse, i.e. $G^{-1}$. To achieve the approximation for the holding of $\mathbf{z}=F(G(\mathbf{z}))$ and  $\mathbf{x}=G(F(\mathbf{x}))$ where $F$ is also a  network-based mapping, a cycle-consistency loss can be an effective solution~\cite{VEEGANnips17,biganiclr2017}:
\begin{equation}
    L_{re}(G,F)=\underbrace{E_{\mathbf{z}}\Vert \mathbf{z}-F(G(\mathbf{z})) \Vert_1}_{\text{to achieve  inverse mapping}}+\lambda\underbrace{E_{x}\Vert \mathbf{x}-G(F(\mathbf{x})) \Vert_1}_{\text{to guarantee the existence}}
\end{equation}
Here $\lambda=d/M$ is the dimension ratio of $\mathbf{x}$ to $\mathbf{z}$. The first term promotes $F$ to be the inverse of $G$, which takes the reconstruction loss as the expectation of the cost for auto-encoding noise vectors~\cite{VEEGANnips17}. The second term promotes $F(\mathbf{x})\in \mathcal{R}^M$, which makes $\Tilde{\mathbf{z}}=F(\mathbf{x})$ possible to be sampled from Gaussian. Then for each $\mathbf{x}$, we can obtain the corresponding $\Tilde{\mathbf{z}}$ in $\mathcal{R}^M$  satisfying $G(\Tilde{\mathbf{z}})=\mathbf{x}$, which guarantees the existence of the real samples' inverses in source space. Note that it does not offer any help to avoid mode imbalance, since the imbalance refers to the overall distribution rather than the individual data points studied here.

\subsection{Regularizer:  Inverse Distribution Consistency}\label{subs:$M$-DGau}
\label{sec:gaussian_consistency_loss}
Recall the necessary condition for IID generation proposed in Prop.~\ref{prop:prop1}: assume that the given real data $\{\mathbf{x}^{(i)}\}_{i=1}^n$ are IID sampled from $p(x)$, then the inverse samples of the real data will be independent and obey the same distribution $p(\mathbf{z})$ (i.e. IID samples from source distribution). Thus, given a batch of real data, $\{F(\mathbf{x}^{(i)})\}_{i=1}^n$ should be closer to independent samples from standard Gaussian $p(\mathbf{z})$. 

\textbf{$M$-D Gaussian consistency loss.}
Suppose the inverse samples $\{\tilde{\mathbf{z}}^{(i)}\}_{i=1}^n\in\mathcal{R}^M$ of real data obey the Gaussian distribution $\mathcal{N}(\mathbf{z};\bm{\mu},\bm\Sigma)$ in latent space, then the maximum likelihood estimation can be formulated as:
\begin{equation}
    \Tilde{\bm{\mu}} = \frac{1}{n}\sum_{i=1}^n\Tilde{\mathbf{z}}^{(i)}, \quad
    \Tilde{\bm{\Sigma}} =  \frac{1}{n}\sum_{i=1}^n\left(\Tilde{\mathbf{z}}^{(i)}-\Tilde{\bm{\mu}}\right)^\top\left(\Tilde{\mathbf{z}}^{(i)}-\Tilde{\bm{\mu}}\right) 
\end{equation}
where $\Tilde{\bm{\mu}}\in \mathcal{R}^M$, and $\Tilde{\bm{\Sigma}}\in \mathcal{R}^{M\times M}$ is the estimated covariance. 
Our goal is to make the Gaussian $q(\mathbf{z})=\mathcal{N}(\mathbf{z};\Tilde{\bm{\mu}},\Tilde{\bm{\Sigma}})$ 
closer to the standard Gaussian $p(\mathbf{z})=\mathcal{N}(\mathbf{z};\mathbf{0},\mathbf{I})$, 
based on which we can view $\{\Tilde{\mathbf{z}}^{(i)}\}_{i=1}^n$ as IID samples from standard Gaussian $p(\mathbf{z})$. To this end, we introduce a distribution closeness loss $L_{Gau}$. For example, it can be specified as the square of Wasserstein distance of two Gaussians:
\begin{equation}
L_{Gau}=\Vert \Tilde{\bm{\mu}}\Vert^2+trace(\Tilde{\mathbf{\Sigma}}+\mathbf{I}-2\Tilde{\mathbf{\Sigma}}^{1/2})
\end{equation}
The two Gaussian distributions $q(\mathbf{z})$ and $p(\mathbf{z})$ can also be evaluated by static divergence, e.g., $p$-norm, KL-divergence. More details about loss selections are presented in Appendix~\ref{sec:methods}.

\textbf{Decoupling $M$-D Gaussian into $M$ $1$-D Gaussians.}
For large $M$ and small batch size, the training may potentially suffer from the curse of dimensionality for estimating $\bm{\Sigma}$. So we also devise a simplified consistency loss by decoupling the $M$-D Gaussian loss to the sum of the $1$-D Gaussian losses of $M$ ones. This design introduces an assumption on the covariance that the non-diagonal values are all zero. Equivalently, the inverse samples $\{\Tilde{\mathbf{z}}^{(i)}\}_{i=1}^n$ follow M $1$-D Gaussian $\mathcal{N}(\Tilde{\mathbf{z}};\bm{\mu},\bm{\sigma})$ and then we can get the estimation $\Tilde{\bm{\mu}}$ and $\Tilde{\bm{\sigma}}$.

Our goal is to make the $1$-D Gaussian distribution $q(\mathbf{z}_j)$ closer to the standard $1$-D Gaussian distribution in each dimension $j$. Similar to the $M$-D case, we can design the Gaussian loss with Wasserstein distance between two Gaussian distributions, and summing over $M$ dimensions:
\begin{equation}
    L_{Gau}= \sum_{m=1}^M\left(\bm{\Tilde{\mu}}_m^2+(\Tilde{\bm{\sigma}}_m-1)^2\right)
    \label{eq:1d_gaussian_loss}
\end{equation}
Throughout the rest of this paper, we directly call it the Gaussian loss, omitting the term \emph{consistency}.

\subsection{IID-GAN: an Overview}
By inputting samples from $p(\mathbf{z})$ together with the corresponding inverse samples $\{\Tilde{\mathbf{z}}\}$ from the real data, we optimize $G$, $D$, and $F$ by adversarial learning to push $\{\Tilde{\mathbf{z}}\}$ closer to the sampling results of $p(\mathbf{z})$. Then we can obtain the final loss as:
\begin{equation}
    \min_{G,F}\max_{D}V(G,D)+\lambda_{re}L_{re}(G,F)+\lambda_{Gau}L_{Gau}(F)
\end{equation}
where $L_{Gau}(F)$ can be specified according to different distances or divergences and $\lambda_{re},\lambda_{Gau}$ are weight parameters, which will be discussed in detail in the experiment section.

%

\textbf{Remarks for Conditional Generation.} IID-GAN can be extended to the conditioned case~\cite{mirza2014conditional} when the label of the real data $c$ is known. As for conditional IID-GAN, $(\mathbf{z},c)$ are the inputs of the generator $G$ and the real $\mathbf{x}$ is the input of $F$ for classification. The Gaussian loss is used to maintain the independence condition and learn the diversity of hidden features (e.g. the thickness for MNIST).

\textbf{Remarks for the Disentanglement View.} Many previous studies~\cite{betaVAE,FactorVAE} learn the unsupervised disentanglement representation with the assumption of independent factors, i.e. $q(\mathbf{z})=\prod_{j=1}^{M} q(\mathbf{z}_j)$. Our work presents a new viewpoint with an $M$-D Gaussian guarantee. When $q(\mathbf{z})$ approximates an $M$-D standard Gaussian, it is obvious that $\mathbf{z}$ is independent for different dimensions. However, varying $\mathbf{z}_i$ can not disentangle different modes as shown in the second column of Fig.~\ref{fig:Generationsamples}. We observe that representing the data in polar coordinates and varying the polar angle and diameter can achieve better performance for unsupervised disentangle representations.

\textbf{IID testing for Gaussian.} The regularization assumes a non-standard Gaussian and enforces the consistency of two Gaussians, which raises the doubt: Is the consistency useful for IID? We apply mathematical statistics here to test the Gaussian IID property. Specifically, we adopt the QQ-plot, Shapiro–Wilk test (SW)~\cite{shapiro1965analysis}, Kolmogorov-Smirnov test (KS)~\cite{gosset1987three} to show whether the samples are IID sampled from standard Gaussian. The results are shown in Fig.~\ref{fig:independence} and Table~\ref{tab:independence}.

\section{Experiments and Discussion}\label{sec:Experiments}
Experiments are conducted on a single GeForce RTX 3090. Synthetic data results are performed on GeForce RTX 2080Ti.
\subsection{Experiments on Synthetic Datasets}\label{sec:syndata}
Since the distribution is known for synthetic data, mode collapse can be directly measured. In line with~\cite{unrollGAN}, we simulate two synthetic datasets: 
\textbf{i) Ring dataset.} It consists of a mixture of 8 2-D Gaussians $p(\mathbf{z})$ with mean $\{(2\cos{(i\pi/4)},2\cos{(i\pi/4)})\}_{i=1}^8$ and standard deviation $0.001$. 12.5K samples are simulated from each Gaussian (100K samples in total). \textbf{ii) Grid dataset.} It consists of a mixture of 25 2-D isotropic Gaussians i.e. $p(\mathbf{z})$ with mean $\{(2i,2j)\}_{i,j=-2}^2$ and standard deviation $0.0025$. 4K samples are simulated from each Gaussian (100K samples in total).

\begin{figure*}[tb!]
	\centering
	\includegraphics[width=0.7\linewidth]{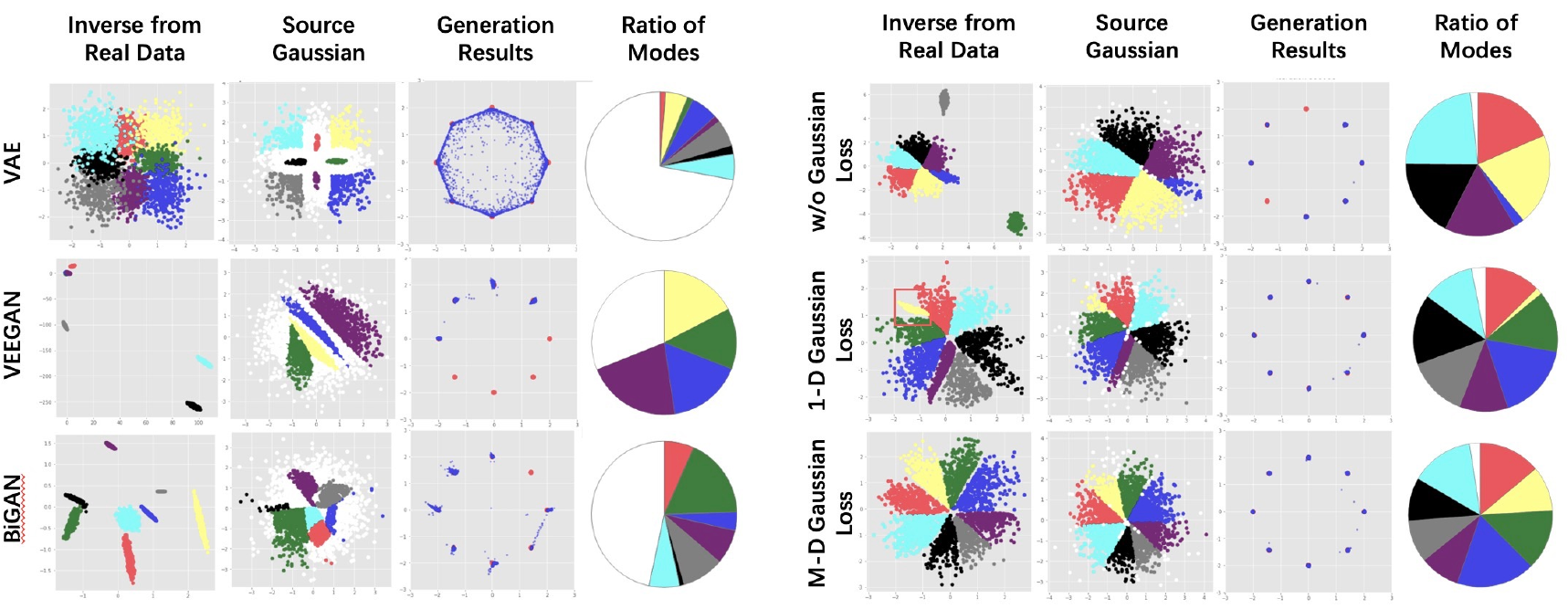}
	\caption{Example for 2-D source ($\mathbf{z}$) to 2-D target ($\mathbf{x}$) generation and inverse adopting 8-mode Ring dataset as the real training set. Left 4 columns are the results of VAE, BiGAN, VEEGAN, while the right 4 columns are the results of IID-GAN under different Gaussian consistency losses as detailed in Sec.~\ref{subs:$M$-DGau} after training 24K batches. For each half part, column 1 and column 5 show the inverse of the real target data, column 2 and column 6 show the sampled $\mathbf{z}$ from Gaussian in source space. Source points are in nine colors (8 `modes' + 1 `bad') according to their generation's mode in the target space. The pie charts show the ratio of valid generation points in different modes.}
	 \label{fig:Generationsamples}
\end{figure*}

\textbf{Metrics and network architecture.} Following \cite{unrollGAN,GDPP2019ICML}, we adopt the number of covered modes, generation quality\footnote{We follow~\cite{de2021bures}: if the generated data is within 3 times std of the Gaussian, consider it a valid generation (otherwise bad). The resulting ratio is used as the generation quality.} and reverse KL divergence as the evaluation metrics. Since in the experiment, each mode shares the same number of real samples, one can calculate the reverse KL divergence between the generated distribution and the real one~\cite{NguyenNIPS17}. Note the reverse KL divergence is not strictly defined, as $\sum_{i=1}^mp_i<1$ (there exist invalid generated points), thus it allows being negative. We adopt network architectures consisting of three linear layers with hidden dimensions of 100, 200, and 100, along with ReLU activation, for both the generator and discriminator.

\begin{table}[tb!]	
\centering 
    \resizebox{1\linewidth}{!}{
		\begin{tabular}{rcccccccc}
			\toprule
			\multirow{2}*{Models}	& \multicolumn{3}{c}{\textbf{2D-Ring}}& &\multicolumn{3}{c}{\textbf{2D-Grid}}\\\cmidrule{2-4}\cmidrule{6-8}
			& \#Mode$\uparrow$	&  Quality$\% \uparrow$& RKL$\downarrow$ &	& \#Mode$\uparrow$	& Quality$\% \uparrow$&RKL$\downarrow$ & \\	\midrule
			
			GAN
			&   3.6$\,\pm\,$0.5 & 98.8$\,\pm\,$0.6 & 0.92$\,\pm\,$0.11  &
			    & 18.4$\,\pm\,$1.6 & \textbf{98.0$\,\pm\,$0.4} & 0.75$\,\pm\,$0.25   \\
			BiGAN
			& 6.8$\,\pm\,$1.0 & 38.6$\,\pm\,$9.5 & 0.43$\,\pm\,$0.18 & 
			    & 24.2$\,\pm\,$1.2 & 83.4$\,\pm\,$2.9 & \textbf{0.26}$\,\pm\,$0.20 &   \\
			Unrolled GAN
			& 6.4$\,\pm\,$2.2 & 98.6$\,\pm\,$0.5 & 0.42$\,\pm\,$0.53 & 
			    & 8.2$\,\pm\,$1.7 & 98.7$\,\pm\,$0.6 & 1.27$\,\pm\,$0.17 &  \\
		VEEGAN
		&   5.4$\,\pm\,$1.2 & 38.8$\,\pm\,$16.7 & 0.40$\,\pm\,$0.10  &	
			    & 20.0$\,\pm\,$2.6 & 85.0$\,\pm\,$5.9 & 0.41$\,\pm\,$0.10   \\\midrule
			 IID-GAN ($1$-D)	&8.0$\,\pm\,$0.0 & 97.3$\,\pm\,$0.6& 0.18$\,\pm\,$0.06 
			    & &25.0$\,\pm\,$0.0 & 97.8$\,\pm\,$0.49 & 0.32$\,\pm\,$0.09   \\   
			 IID-GAN ($M$-D)	& \textbf{8.0$\,\pm\,$0.0} & \textbf{99.0$\,\pm\,$0.2} & \textbf{0.17$\,\pm\,$0.06} 
			    & &\textbf{25.0$\,\pm\,$0.0} & \textbf{98.0$\,\pm\,$0.4} & \textbf{0.26$\,\pm\,$0.12 }  \\ 
			    \bottomrule
		\end{tabular}
	}
\caption{Results for Ring and Grid synthetic datasets.}
\label{table:Synthetic}
\end{table}

\textbf{Quantitative results.} The comparison involve vanilla GAN~\cite{goodfellow2014generative}, BiGAN~\cite{biganiclr2017}, Unrolled GAN~\cite{unrollGAN} and VEEGAN~\cite{VEEGANnips17} on Ring and Grid datasets in Table~\ref{table:Synthetic}. IID-GAN variants are the only methods covering all modes on both datasets and IID-GAN ($M$-D) achieves the best quality and RKL performance compared to other peer methods.

\textbf{Visualization with Gaussian inverse samples.} Models involving inverse mappings (i.e. encoders) allow the visualization for the inverses of real samples. Fig.~\ref{fig:Generationsamples} present the visualization results including the inverse from real data, source Gaussian samples and generated results of VAE~\cite{VAEICLR14}, VEEGAN, BiGAN and IID-GAN variants. As shown in the left four columns, VAE can cause the overlap of the inverse samples $\mathbf{z}$, which may lead to bad generations (white points).  BiGAN and VEEGAN  learn the relations between $\mathbf{z}$ and $\mathbf{x}$ rather than the relation among the samples $\{G^{-1}(\mathbf{x})\}$, which leads to the failure of constructing 2D inverse Gaussian samples as shown in the first column. For IID-GAN, as shown in the fifth column for IID-GAN (M-D), the inverse samples are very similar to the Gaussian samples, which show the effectiveness of the regularization.  

\textbf{Visualization of generated distribution.} Fig.~\ref{fig:vis_syn} shows the generation results of IID-GAN and peer methods. IID-GAN significantly outperforms other peer methods in terms of mode coverage and better fits the target distribution.

\begin{figure}[tb!]
\centering
\begin{subfigure}{0.49\linewidth} {\includegraphics[width=1\linewidth]{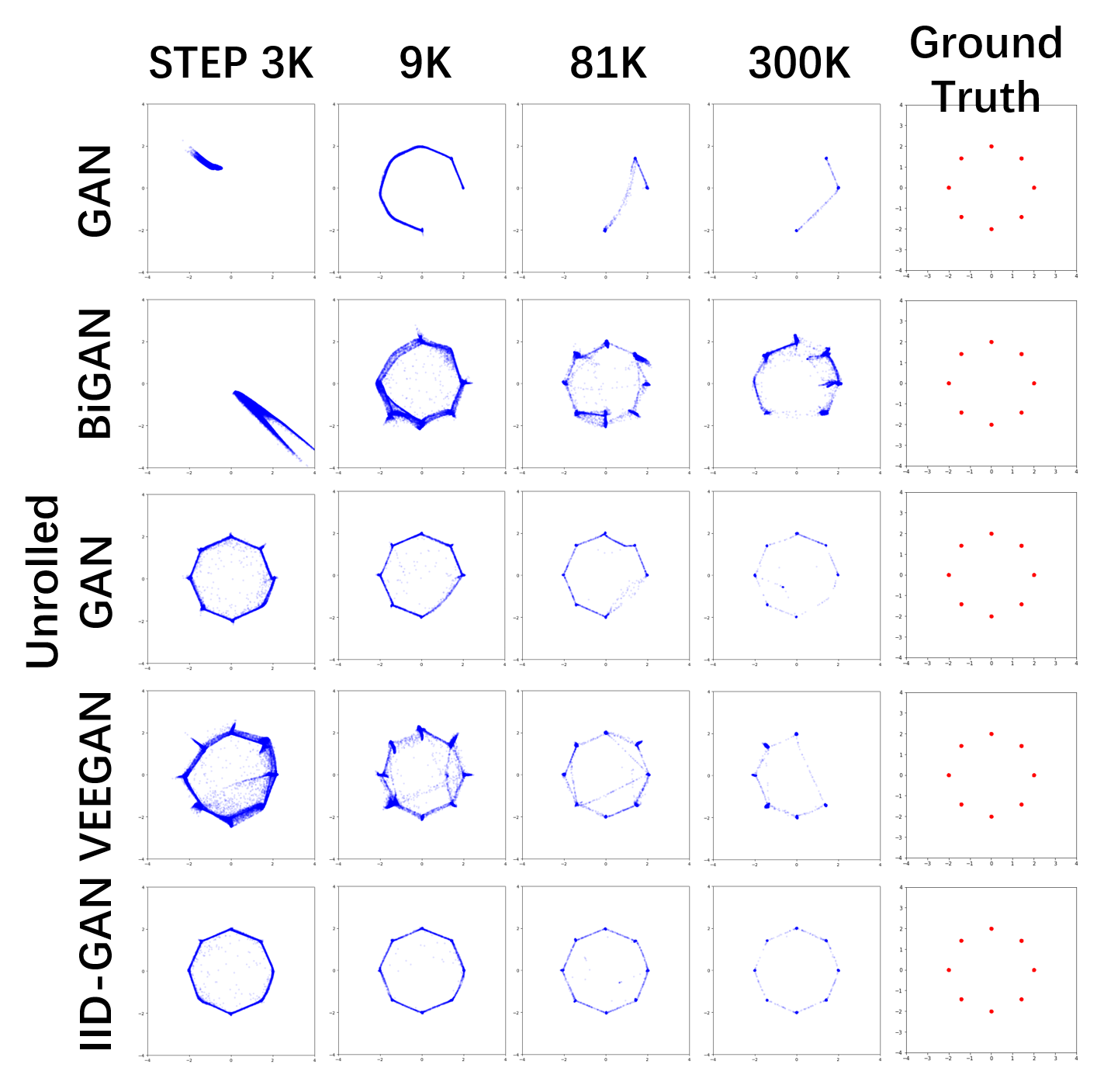}}
	\caption{Results of Ring data}
	\end{subfigure}
	\begin{subfigure}{0.49\linewidth}{\includegraphics[width=1\linewidth]{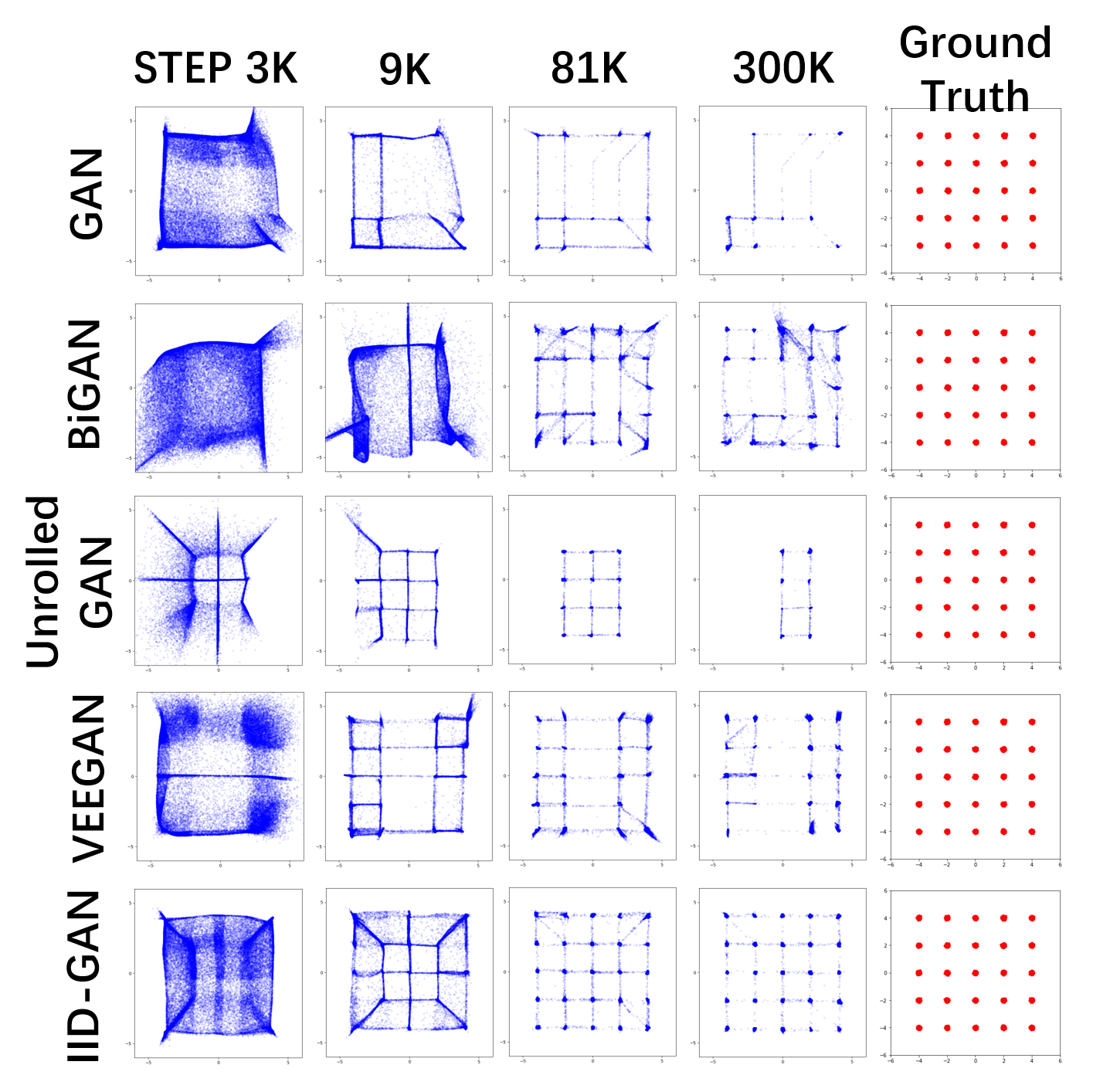}}
	\caption{Results of Grid data}
	\end{subfigure}
\caption{Generation results on Ring and Grid datasets. Blue: generated samples; Red: ground truth distribution.}
\label{fig:vis_syn}
\end{figure}

\textbf{IID test for inverse samples.}  We support IID test for Gaussian distribution on Ring dataset in  Table~\ref{tab:independence}. Shapiro-Wilk (SW) and Kolmogorov-Smirnov (KS) Statics are calculated to verify IID property. In Fig.~\ref{fig:independence}, QQ-plot are supported for IID test. Compared to other peer methods, IID-GAN's plots are closer to the red diagonal, implying that the inverse samples of IID-GAN are closer to Gaussian.

\textbf{Ablation study for Gaussian consistency loss.} The right four columns of Fig.~\ref{fig:Generationsamples} compare the results of $1$-D, $M$-D and without Gaussian consistency loss in Ring datasets. For $1$-D loss's inverse samples in source space, the yellow sample region in the red box leads to a tendency to mode collapse, while the generated modes in the $M$-D case are more uniformly scaled and more effective for solving mode collapse.

\subsection{Experiments on Real-world Data}

The experimented image datasets include MNIST~\cite{lecun1998gradient}, Stacked MNIST~\cite{unrollGAN}, CIFAR-10~\cite{cifar}, STL-10~\cite{coates2011analysis}, LSUN~\cite{yu15lsun} and CELEBA~\cite{liu2015faceattributes}.  We adopt different architectures to evaluate our model,  mainly following the previous studies~\cite{dcgan,de2021bures,miyato2018spectral}. All the compared models are trained in 100K steps. Details about the network architectures are presented in Appendix~\ref{app:realArchitectures}. We generally adopt $p$-norm distance as $M$-D Gaussian loss. Please refer to Appendix~\ref{app:details} for more details.

\begin{figure}[tb!]
	\centering	
	{\includegraphics[width=0.9\linewidth]{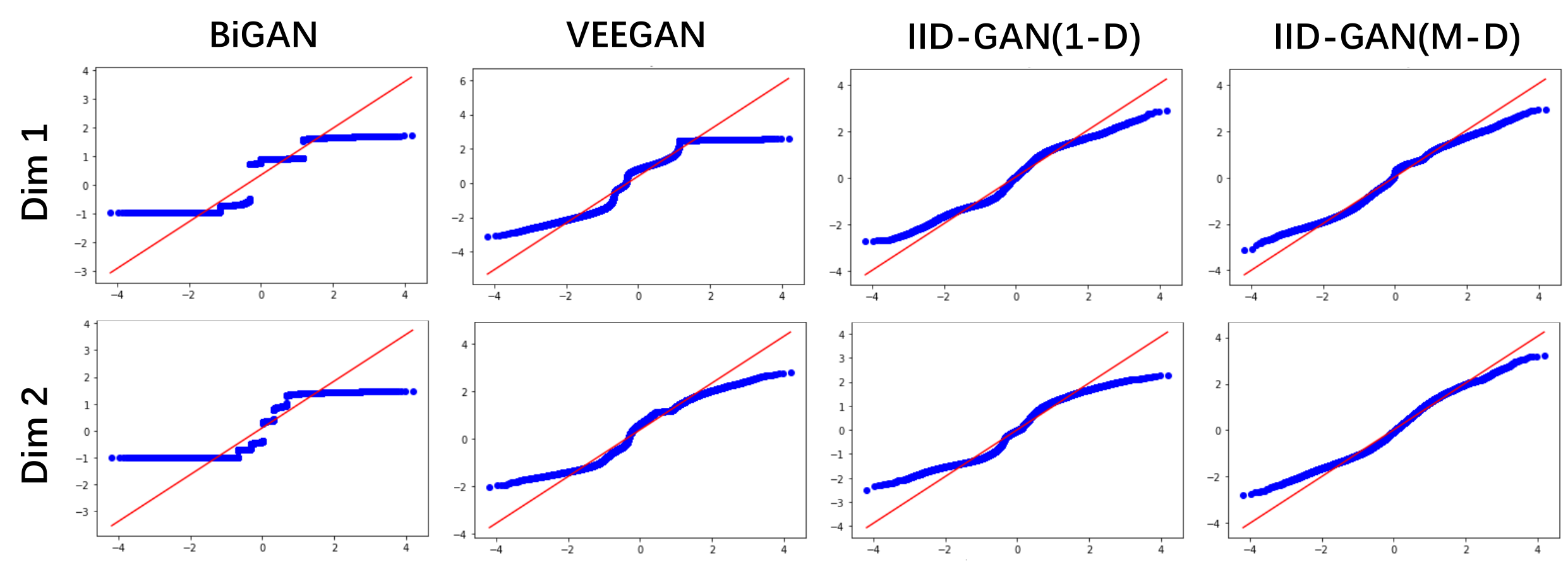}}
	\caption{QQ-plot of standard 2-D Gaussian over two dimensions. The closer to the diagonal, the closer to the Gaussian distribution.}
	\label{fig:independence}
\end{figure}

\begin{figure}[tb!]
	\centering	
	\begin{subfigure}{0.45\linewidth}
	\centering
	\includegraphics[width=1\linewidth]{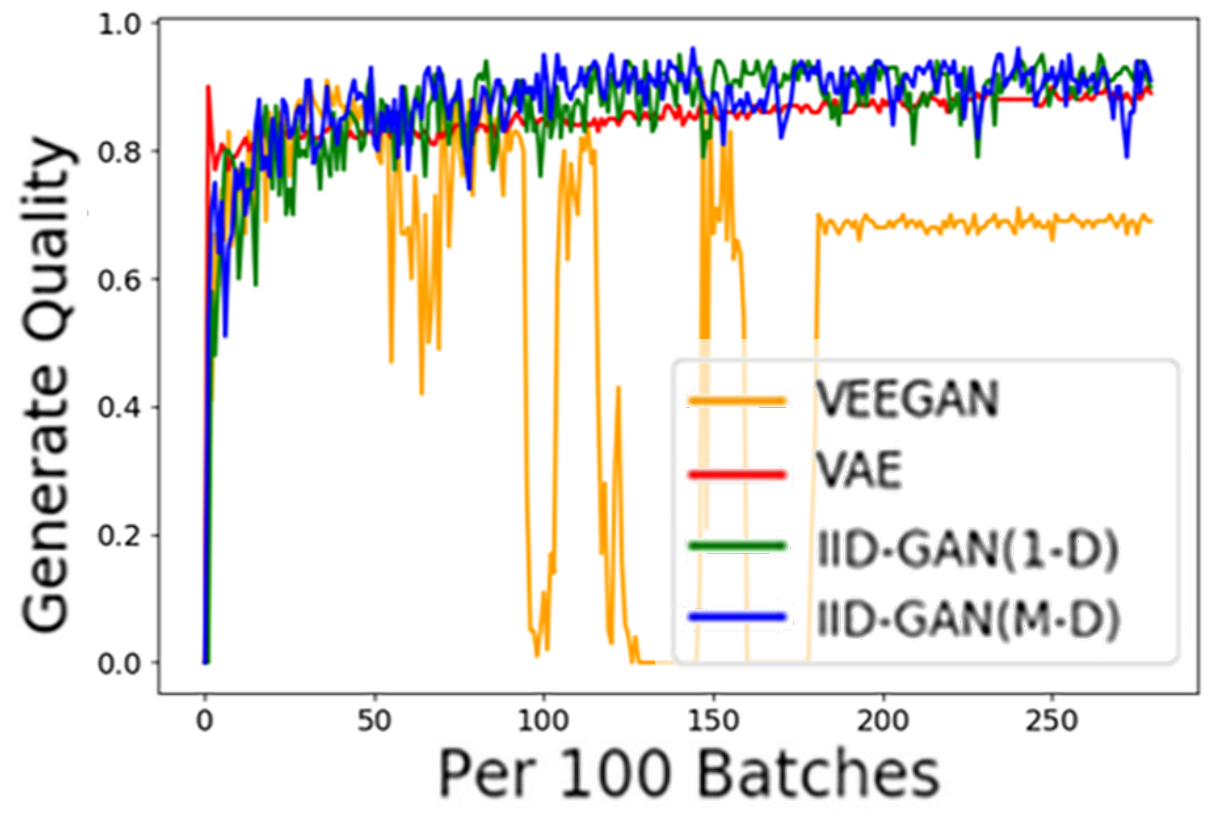}\caption{Generation Quality}\label{sfig:QMNIST}
	\end{subfigure}
	\quad\quad
	\begin{subfigure}{0.45\linewidth}
	\centering
	\includegraphics[width=1\linewidth]{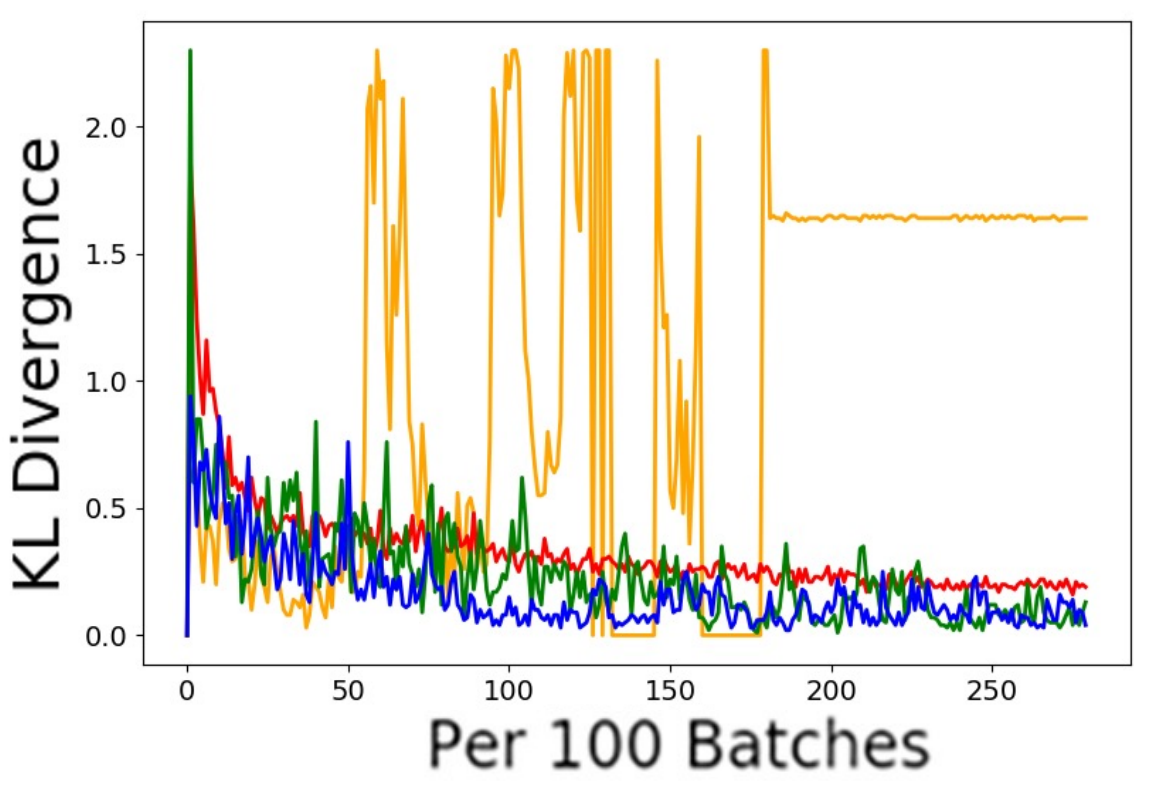}\caption{KL divergence}\label{sfig:KLMNIST}
	\end{subfigure}
	\caption{Generation quality and KL divergence (for diversity) calculated through the generations' classification results.}\label{fig:MNIST}
\end{figure}

 \begin{figure*}[t]
  \centering
	{\includegraphics[width=0.98\linewidth]{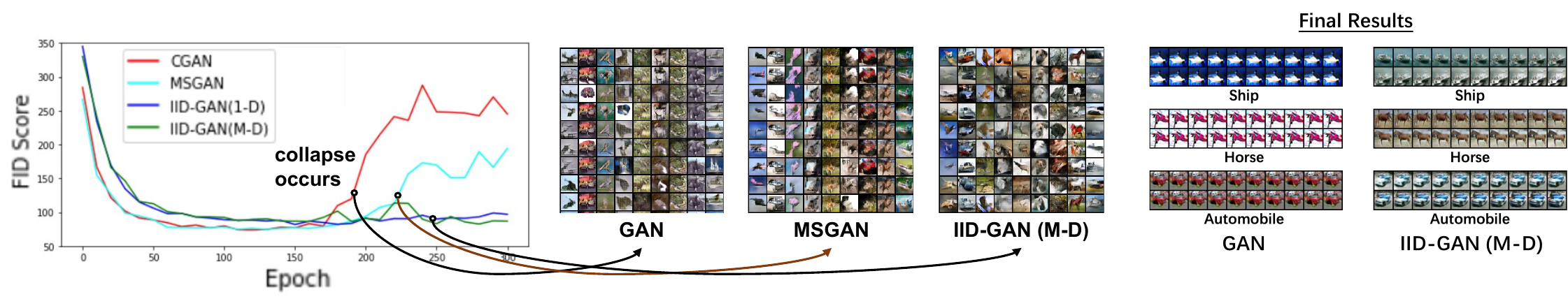}}
     \vspace{-5pt}
	\caption{Conditional results on CIFAR-10 for CGAN, MSGAN and conditional IID-GAN. The generation results of CGAN and MSGAN can easily deteriorate as training proceeds in terms of FID score, while IID-GAN performs consistently well. The final results are achieved by changing one dimension of the input latent code $z$ by increasing or decreasing the value equidistantly, producing a gradual series of images.}
	\label{fig:CIFAR10_CGAN}
\end{figure*}


\begin{figure}
\begin{minipage}[t]{0.52\linewidth}
  \centering
     \resizebox{1\linewidth}{!}{
    \begin{tabular}{rccc}
    \toprule
\multirow{2}*{Model} & 
\multirow{2}*{SW Static $\uparrow$} & 
\multicolumn{2}{c}{KS Static $\downarrow$} \\ \cmidrule{3-4}
&& $dim 1$ & $dim 2$\\
\midrule
            BiGAN
            & 0.8537    & 0.3900 & 0.1637   \\
            VEEGAN
            & 0.9567 & 0.3141   & 0.2350 \\ \midrule
            IID-GAN (1-D)  & 0.9548   & \textbf{0.0882} & 0.0866 \\
            IID-GAN (M-D)   & \textbf{0.9824} & 0.1185  & \textbf{0.0579}\\
            \bottomrule
            \end{tabular}}
      \vspace{-5pt}
    \makeatletter\def\@captype{table}\makeatother\caption{SW and KS Static evaluation for IID test on Ring.}
    \label{tab:independence}
  \end{minipage}\quad
  \begin{minipage}[t]{0.43\linewidth}
   \centering
\resizebox{1\linewidth}{!}{
       \begin{tabular}{rccccc} 
    \toprule
	\multirow{2}*{Model}	& \multicolumn{3}{c}{\textbf{Stacked MNIST}}\\\cmidrule{2-4}
	& \#Mode$\uparrow$	& KL$\downarrow$ &FID$\downarrow$\\	\midrule
	GAN
	&   392.0 &8.012&97.788 \\
	VEEGAN
	&   761.8 &2.173&86.689\\
	PACGAN
	&   992.0 &0.277&117.128 \\\midrule
	IID-GAN ($1$-D)&996.4&0.152&86.911\\
	IID-GAN ($M$-D)& \textbf{999.7}&\textbf{0.101}&\textbf{69.675}\\\bottomrule
    \end{tabular}}
         \vspace{-3pt}
    \makeatletter\def\@captype{table}\makeatother\caption{Results on Stacked MNIST.} \label{tab:mnsit}
    \label{table:Stackmnist}
   \end{minipage}
\end{figure}

\textbf{Metrics.} We use Inception Score (IS)~\cite{SalimansArxiv16} and Fréchet Inception Distance (FID)~\cite{FIDICLR17}. Following \cite{richardson2018gans}, JSD measures the distribution distance between generated images and real images through clustering, which generally capture the distribution of image modes, thus it is more diversity related. For easily distinguishable datasets like MNIST, we use the number of covered modes and reverse KL for evaluation.

\textbf{Training stability.} In Fig.~\ref{fig:MNIST}, the generation quality and KL divergence are evaluated on MNIST. The generation quality is defined as the ratio of high-confidence samples evaluated by a trained classifier. $1$-D and $M$-D IID-GAN can lead to less mode imbalance (which is evaluated by KL divergence) and higher quality than VAE, while VEEGAN is unstable and often fails to successfully generate in case of bad initialization. 

\begin{table}[tb!]
\centering
    \resizebox{1\linewidth}{!}{
     \begin{tabular}{rccccccccc}
	\toprule
	\multirow{2}*{Model}	& \multicolumn{3}{c}{\textbf{CIFAR-10}}& &\multicolumn{3}{c}{\textbf{STL-10}}\\\cmidrule{2-4}\cmidrule{6-8}
	& JSD{\scriptsize{$\times10^2$}}$\downarrow$ &  IS$\uparrow$	& FID$\downarrow$  & & JSD{\scriptsize{$\times10^2$}}$\downarrow$ &IS$\uparrow$&FID$\downarrow$\\	\midrule
	Unrolled GAN
	& 3.23$\pm$0.36 & 7.28$\pm$0.13 & 27.07$\pm$0.74  && 3.32$\pm$0.27 & 8.28$\pm$0.03 & 38.79$\pm$0.56  	\\
	VEEGAN
	& 4.83$\pm$0.33 & 7.12$\pm$0.07 & 26.94$\pm$0.94  && 5.23$\pm$0.36 & 8.39$\pm$0.05 & 37.25$\pm$0.32  \\
	MAD-GAN
	& 7.53$\pm$0.78 & 7.00$\pm$0.02 & 29.24$\pm$0.36  && 9.15$\pm$0.43 & 7.99$\pm$0.03 & 40.03$\pm$0.10	\\
	GDPP
	& 5.25$\pm$0.29 & 7.23$\pm$0.06 & 26.90$\pm$0.15  && 4.18$\pm$0.19 & 8.36$\pm$0.02 & 38.12$\pm$0.22 \\
	BuresGAN
	& 6.19$\pm$0.72 & 7.22$\pm$0.01 & 28.28$\pm$0.28  && 4.33$\pm$0.19 & 8.17$\pm$0.03 & 39.22$\pm$0.39	\\
	IID-GAN
	& \textbf{2.95$\pm$0.22} & \textbf{7.35$\pm$0.03} & \textbf{25.71$\pm$0.28}  && \textbf{3.11$\pm$0.25} & \textbf{8.43$\pm$0.04} & \textbf{36.16$\pm$0.51}  	\\
	\bottomrule
\end{tabular}}
    \vspace{-5pt}
\caption{Results on CIFAR-10 and STL-10.}\label{tab:cifar10_100}
\end{table}

\textbf{Quantitative results.} \textbf{(i)}  Table~\ref{tab:mnsit} show the \#Mode, KL divergence and FID results on Stacked MNIST. The KL divergence calculation is based on the classification results by the classifier  proposed by \cite{dieng2019prescribed}. The architecture is  DCGAN-based  in line with~\cite{dcgan}.  \textbf{(ii)} Table~\ref{tab:cifar10_100} shows the experimental results of CIFAR-10~\cite{cifar} and STL-10~\cite{coates2011analysis}. We adopt JSD, IS, FID as the metrics, including UnrolledGAN~\cite{unrollGAN}, VEEGAN~\cite{VEEGANnips17}, MAD-GAN~\cite{GhoshCVPR18}, GDPP~\cite{GDPP2019ICML} and BuresGAN~\cite{de2021bures} for comparison. To fit with advanced GAN achievements, we adopt SNGAN~\cite{miyato2018spectral} as the backbone. \textbf{(iii)} We also evaluate on larger scale datasets e.g. CELEBA~\cite{liu2015faceattributes} and LSUN Church~\cite{yu15lsun} with more recently proposed works pursuing diversity for comparison including Dist-GAN~\cite{tran2018dist}, MGGAN~\cite{bang2021mggan} and AMAT~\cite{mangalam2021overcoming}. We adopt FID, JSD as the evaluation metrics. The results are presented in Table~\ref{table:advancedArch}. \textbf{(iv)} We evaluate IID-GAN with StyleGAN2-ada~\cite{karras2020training} backbone on 64$\times$64 resized AFHQ, LSUN via FID and KID metrics for same training iterations, i.e. 10M images trained (measured in real images shown to the discriminator following StyleGAN2-ada) and show its  positive effect.

\begin{figure}[tb!]
    \centering
    \begin{subfigure}{0.4\linewidth}
	\centering
	\includegraphics[width=1\linewidth]{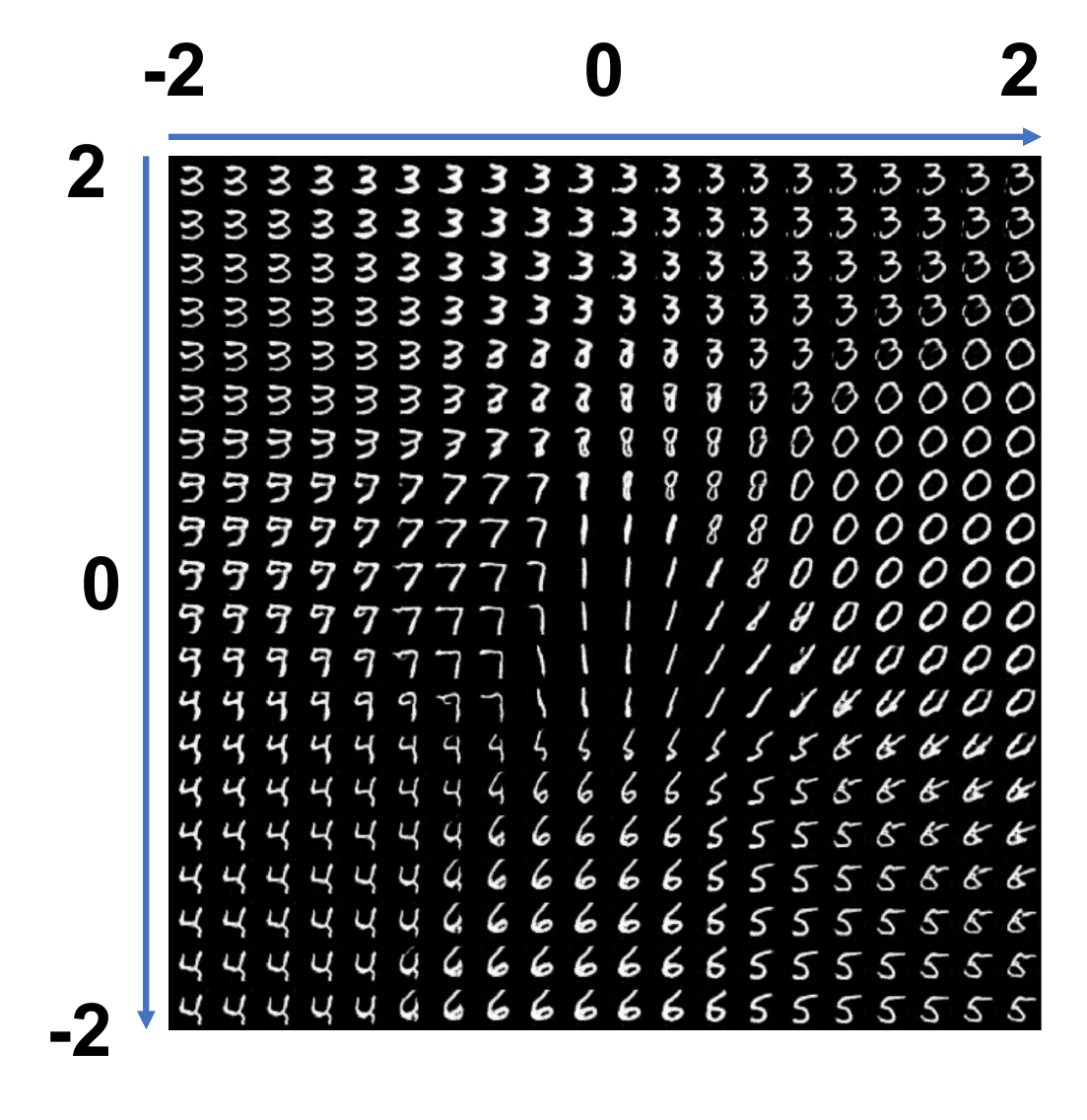}\caption{Cartedian Coordinate}\label{fig:disen1}
	\end{subfigure}\quad
\begin{subfigure}{0.4\linewidth}
	\centering
	\includegraphics[width=1\linewidth]{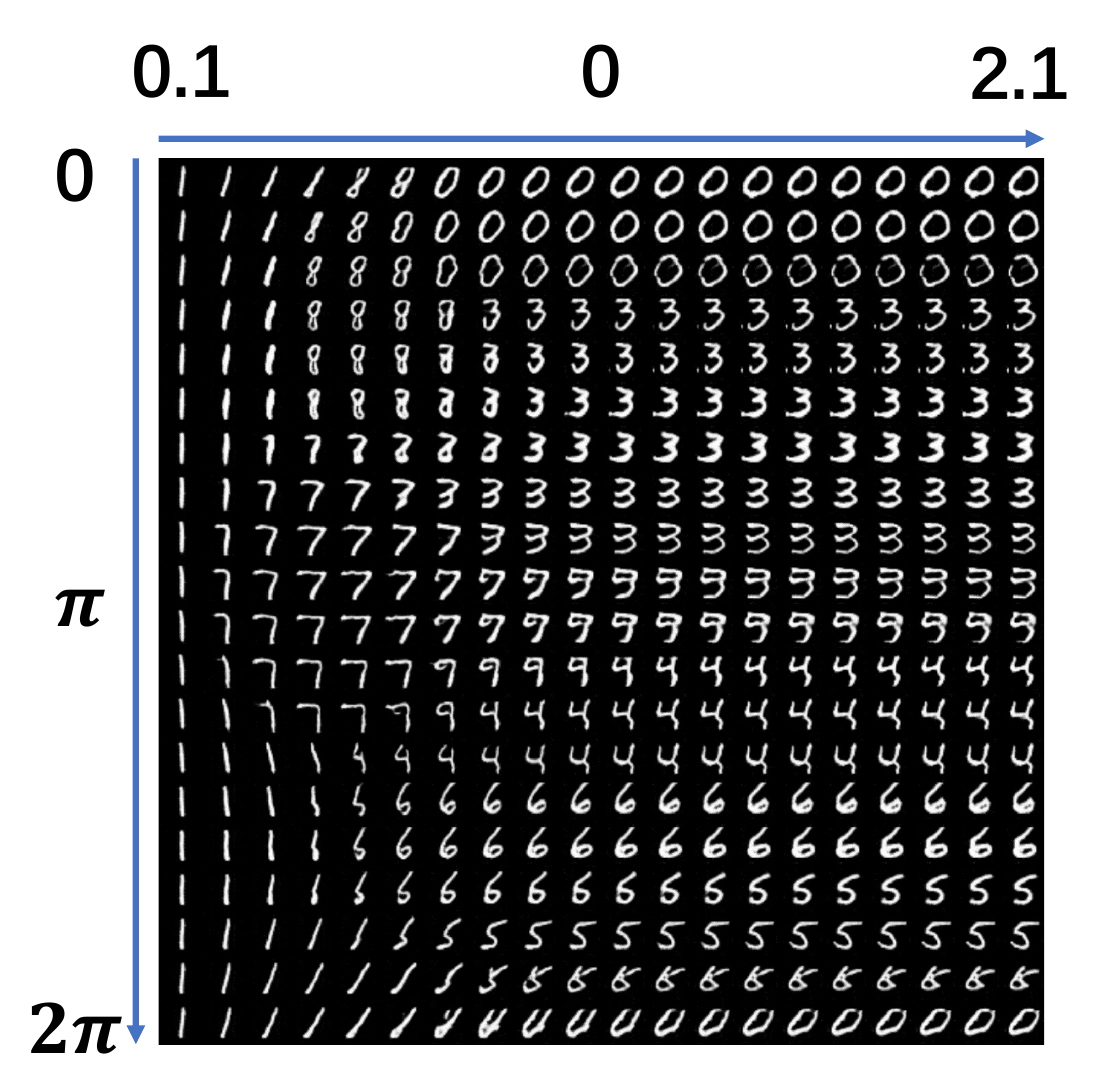}\caption{Polar Coordinate}\label{fig:disen2}
	\end{subfigure}
    \caption{Disentanglement by uniformly sampling on MNIST.}
    \label{fig:disen}
\end{figure}

\begin{table}[tb!]	
\centering
    \resizebox{0.85\linewidth}{!}{
		\begin{tabular}{cccccccc}
    \toprule
        \multirow{2}*{Method} & \multicolumn{2}{c}{\textbf{CELEBA}} & & \multicolumn{2}{c}{\textbf{LSUN}}\\\cmidrule{2-3}\cmidrule{5-6}
        &FID$\downarrow$ & JSD{\scriptsize{$\times10^2$}}$\downarrow$  &&FID$\downarrow$ & JSD{\scriptsize{$\times10^2$}}$\downarrow$ \\\midrule
        Dist-GAN
        & 37.374$\pm$0.994 & 3.604$\pm$0.202  &&
        34.491$\pm$0.487 &  5.042$\pm$0.372  \\
        MGGAN
        & 19.491$\pm$0.348  & 3.507$\pm$0.159 &&
        13.018$\pm$0.364 &  7.464$\pm$0.805\\
        AMAT
        & 29.720$\pm$0.728 & 4.789$\pm$0.158  &&
        31.585$\pm$0.923 &  4.914$\pm$0.084\\
        IID-GAN & \textbf{18.715$\pm$0.130} & \textbf{3.294$\pm$0.183} &&
        \textbf{11.755$\pm$0.451} & \textbf{4.210$\pm$0.651} \\
    \bottomrule
    \end{tabular}
 	}
 	 \vspace{-5pt}
 \caption{Evaluation (calculated at 100K steps of training) on real-world  CELEBA and LSUN, using SNGAN as backbone.}
\label{table:advancedArch}
\end{table}

\begin{table}[!htb]
    \centering
    \label{tab:1}
    \resizebox{0.9\linewidth}{!}{
    \begin{tabular}{cccccc}
    \toprule
       \multirow{2}*{Method} & \multicolumn{2}{c}{\textbf{AFHQ Cat}} && \multicolumn{2}{c}{\textbf{LSUN Church}}\\\cmidrule{2-3}\cmidrule{5-6}
        & FID$\downarrow$ & KID{\scriptsize{$\times10^3$}}$\downarrow$ && FID$\downarrow$ & KID{\scriptsize{$\times10^3$}}$\downarrow$\\\midrule
    StyleGAN2-ada & 6.386$\pm$0.503  & 1.077$\pm$0.113 && 3.530$\pm$0.021 & 1.415$\pm$0.008 \\
    IID-GAN &  \textbf{5.373$\pm$0.142}   &  \textbf{0.897$\pm$0.126} && \textbf{3.208$\pm$0.111} &  \textbf{1.317$\pm$0.095}\\
    \bottomrule
    \end{tabular}
 	}
 	\vspace{-5pt}
 	\caption{Evaluation on StyleGAN2-ada backbone.}
\end{table}

\textbf{Results on conditional generation and disentanglement.}  Fig.~\ref{fig:CIFAR10_CGAN} shows the results for conditional generation on CIFAR-10. The conditional IID-GANs are the most stable and robust model, and maintain the most diverse generation for each given category. Fig.~\ref{fig:disen} shows the visualization results in Cartesian and Polar Coordinate on MNIST. To better visualize the distribution, we use a 2-D latent space for generation.

\section{Conclusion}\label{sec:Conclusion}
To address GAN's long-standing mode collapse issue, we provide an IID sampling perspective to analyze the generation behavior and offer our new methodology guidance, which is orthogonal to existing literature. The proposed IID-GAN shows its effectiveness on both synthetic and real-world datasets.

\clearpage
\newpage
\appendix
\section*{Appendix}

\section{Gaussian Loss with Different Divergence}\label{sec:methods}
In this paper, Gaussian Consistency loss involves three different forms to reduce mode collapse:

\textbf{1) p-norm for the difference of mean and variance.} To evaluate the divergence of two Gaussian distributions $\mathcal{N}(\mathbf{z};\bm{0},\bm{I})$ and $\mathcal{N}(\mathbf{z};\Tilde{\bm{\mu}},\Tilde{\bm\Sigma})$, we first calculate the difference of the parameters of Gaussian with p-norm:
\begin{equation}
   L_{Gau}=\Vert \Tilde{\bm{\mu}} \Vert_{p}+\Vert \Tilde{\bm{\Sigma}}-\mathbf{I} \Vert_{p}
\end{equation}

\textbf{2) Wasserstein distance.}
Given two $M$-D Gaussains $p(\mathbf{z})$ and $q(\mathbf{z})$, the 2-Wasserstein distance is:
$W_2(p(\mathbf{z}),q(\mathbf{z}))=0$ if and only if $\Tilde{\mathbf{\mu}}=0$ and $\Tilde{\mathbf{\Sigma}}=\mathbf{I}$.

\textbf{3) KL divergence.}
Given $M$-D Gaussians $p(\mathbf{z})$ and $q(\mathbf{z})$, the KL divergence $ KL(p(\mathbf{z}),q(\mathbf{z}))$ can be specified as:
\begin{equation}
    L_{Gau}=\frac{1}{2}\left\{\log(\det(\Tilde{\mathbf{\Sigma}}))-M+tr(\Tilde{\mathbf{\Sigma}}^{-1})+{\Tilde{\mathbf{\mu}}}^\top{\Tilde{\mathbf{\Sigma}}}^{-1}{\Tilde{\mathbf{\mu}}}\right\}
\end{equation}


Results of IID-GAN with different Gaussion losses evaluated on synthetic dataset are presented in Fig.~\ref{fig:GaussianLoss}.

\begin{figure}[!h]
	\centering
	\begin{subfigure}{0.4\linewidth}{\includegraphics[width=1\linewidth]{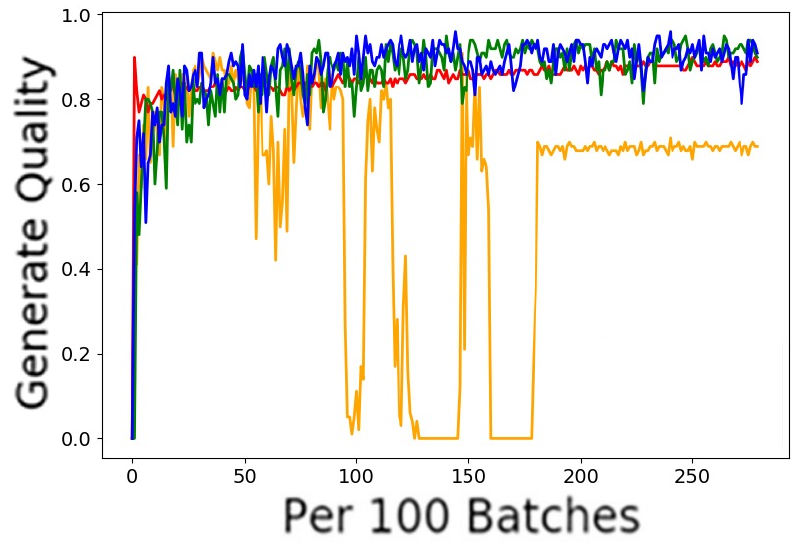}\label{sfig:QR}}\caption{Generation quality }
	\end{subfigure}
    \begin{subfigure}{0.4\linewidth}{\includegraphics[width=1\linewidth]{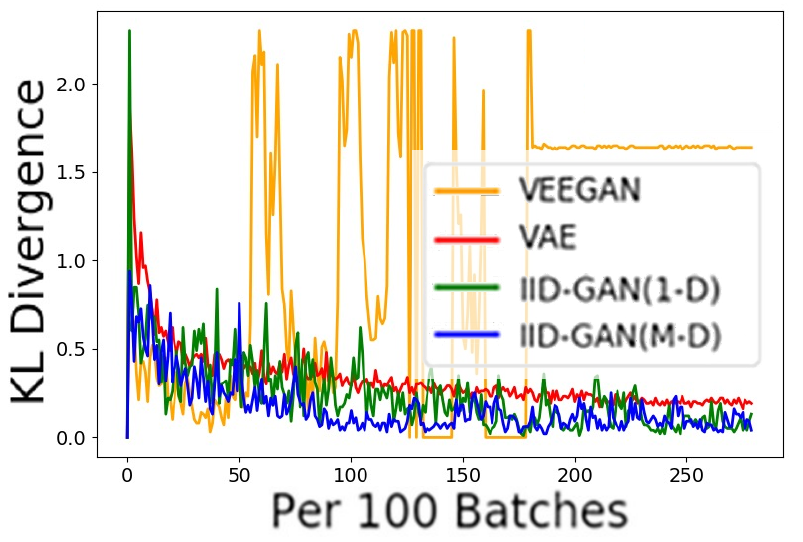}\label{sfig:KL}}\caption{Inverse KL divergence}
    \end{subfigure}
	\caption{Comparison of Gaussian consistency loss on Ring data} \label{fig:GaussianLoss}
\end{figure}

\section{Experimental Details}
\subsection{Training Details for Synthetic Data}
We set the loss weights as $(\lambda_{re},\lambda_{Gau})=(3,10)$. Training batch size is set as 100 and we conduct 300 epochs for training.  The metrics are calculated over 5 repetitions of evaluation.

\subsection{Network Architectures for Real Data}\label{app:realArchitectures}

The architectures for CIFAR-10 are presented in Table~\ref{tab:stl-g} and ~\ref{tab:stl-d}. The architectures for CELEBA and LSUN are presented in Table~\ref{tab:celeba-g} and ~\ref{tab:celeba-d}. The inverse mapping primarily follows the design of discriminator, with only the dimension of the output layer changed to match that of the latent space.

\begin{figure}
\begin{minipage}[t]{0.48\linewidth}
  \centering
     \resizebox{0.8\linewidth}{!}{
     \begin{tabular}{ccc}
     
        \toprule
        Layer& Output size\\
        \midrule
        ConvTranspose2d & $2\times2\times1024$ & \\ 
        BatchNorm2d & $2\times2\times1024$ & \\
        Relu & $2\times2\times1024$ & \\\cmidrule{1-3}
        ConvTranspose2d & $4\times4\times512$ & \\ 
        BatchNorm2d & $4\times4\times512$ & \\
        Relu & $4\times4\times512$ & \\\cmidrule{1-3}
        ConvTranspose2d & $8\times8\times256$ & \\ 
        BatchNorm2d & $8\times8\times256$ & \\
        Relu & $8\times8\times256$ & \\\cmidrule{1-3}
        ConvTranspose2d & $16\times16\times128$ & \\ 
        BatchNorm2d & $16\times16\times128$ & \\
        Relu & $16\times16\times128$ & \\\cmidrule{1-3}
        ConvTranspose2d & $32\times32\times3$ & \\ 
        Tanh & $32\times32\times3$ & \\
        \bottomrule
        \end{tabular}
     }
     
         \makeatletter\def\@captype{table}\makeatother\caption{Network Architectures of  Generator $G$ for CIFAR-10.}
     \label{tab:stl-g}
  \end{minipage}\quad
  \begin{minipage}[t]{0.45\linewidth}
   \centering
        \resizebox{0.8\linewidth}{!}{
\begin{tabular}{ccc}
        \toprule
        Layer& Output size \\
        \midrule
        SN+Conv2d & $16\times16\times64$ & \\
        LeakyRelu & $16\times16\times64$ & \\\cmidrule{1-2}
        SN+Conv2d & $8\times8\times128$ & \\ 
        LeakyRelu & $8\times8\times128$ & \\
        BatchNorm2d & $8\times8\times128$ & \\
        \cmidrule{1-2}
        SN+Conv2d & $4\times4\times256$ & \\ 
        LeakyRelu & $4\times4\times256$ & \\
        BatchNorm2d & $4\times4\times256$ & \\
        \cmidrule{1-2}
        SN+Conv2d & $2\times2\times512$ & \\ 
        LeakyRelu & $2\times2\times512$ & \\
        BatchNorm2d & $2\times2\times512$ & \\
        \cmidrule{1-2}
        faltten & $2048$ & \\ 
        linear & $1$ & \\
        \bottomrule
        \end{tabular}
        }
    \makeatletter\def\@captype{table}\makeatother\caption{Network Architecture of Inverse $D$ for CIFAR-10.}
        \label{tab:stl-d}
   \end{minipage}
\end{figure}

\begin{figure}
\begin{minipage}[t]{0.51\linewidth}
  \centering
     \resizebox{0.8\linewidth}{!}{
    
    \begin{tabular}{ccc}
    \toprule
    Layer& Output size\\
    \midrule
    Linear+Reshape&8$\times$8$\times$512\\
    \midrule
    ConvTranspose2d & $16\times16\times256$ & \\ 
    BatchNorm2d & $16\times16\times256$ & \\
    Relu & $16\times16\times256$ & \\\midrule
    ConvTranspose2d & $32\times32\times128$ & \\ 
    BatchNorm2d & $32\times32\times128$ & \\
    Relu & $32\times32\times128$ & \\\midrule
    ConvTranspose2d & $64\times64\times64$ & \\ 
    BatchNorm2d & $64\times64\times64$ & \\
    Relu & $64\times64\times64$ & \\\midrule
    Conv2d & $64\times64\times3$ & \\ 
    Tanh & $64\times64\times3$ & \\
    \bottomrule
    \end{tabular}}
     \makeatletter\def\@captype{table}\makeatother\caption{Architecture of $G$ for CELEBA and LSUN.}
     \label{tab:celeba-g}
  \end{minipage}\quad
  \begin{minipage}[t]{0.4\linewidth}
   \centering
    
        \resizebox{0.8\linewidth}{!}{
        
        \begin{tabular}{ccc}
        \toprule
        Layer& Output size \\
        \midrule
        SN+Conv2d & $64\times64\times64$ & \\
        LeakyReLU & $64\times64\times64$ & \\
        SN+Conv2d & $32\times32\times64$ & \\
        LeakyReLU & $32\times32\times64$ & \\
        \midrule
        SN+Conv2d & $32\times32\times128$ & \\
        LeakyReLU & $32\times32\times128$ & \\
        SN+Conv2d & $16\times16\times128$ & \\
        LeakyReLU & $16\times16\times128$ & \\
        \midrule
        SN+Conv2d & $16\times16\times256$ & \\
        LeakyReLU & $16\times16\times256$ & \\
        SN+Conv2d & $8\times8\times256$ & \\
        LeakyReLU & $8\times8\times256$ & \\
        \midrule
        SN+Conv2d & $8\times8\times512$ & \\
        LeakyReLU & $8\times8\times512$ & \\
        \midrule
        Faltten+Linear & 1 & \\
        \bottomrule
        \end{tabular}}
         \makeatletter\def\@captype{table}\makeatother\caption{Architecture of $D$ for CELEBA and LSUN.}
        \label{tab:celeba-d}
        
   \end{minipage}
\end{figure}

\subsection{Training Details for Real Data}\label{app:details}
\noindent\textbf{MNIST.} 
We set the loss weights as $(\lambda_{re},\lambda_{Gau})=(0.5,0.1)$. Following~\cite{dieng2019prescribed}, 
we use the 10-category classifier to divide the generated images into 11 categories. If the highest probability of the generated image's prediction is smaller than 0.75, we treat it as a bad generation and classify it to the bad class. Otherwise, its label is determined according to the highest probability. Latent space dimension is set as 2. We train the models for 300 epochs.

\noindent\textbf{StackedMNIST.}. The StackedMNIST covers 1,000 known modes, as constructed by stacking three randomly sampled MNIST images along the RGB channels in line with the practice in~\cite{unrollGAN}. We follow~\cite{VEEGANnips17} to evaluate the number of covered modes and divergence between the real and generation distributions. The weights are set as $(\lambda_{re},\lambda_{Gau})=(3,3)$. Latent space dimension is set as 100. We train the models for 50 epochs.

\noindent\textbf{CIFAR-10, STL-10, CELEBA and LSUN.} 
All models are trained for 100K steps (mini-batches). We set $(\lambda_{re},\lambda_{Gau})=(0.01,0.1)$. Latent space dimension is set as 128 and training batch size is set as 128. The metrics are calculated over 3 repetitions of evaluation. Fig.~\ref{fig:cifar10-result} presents the generation results of IID-GAN on CELEBA and LSUN Church datasets.

\begin{figure}[!ht]
    \centering 
    \begin{subfigure}{0.4\linewidth} {\includegraphics[width=1\linewidth]{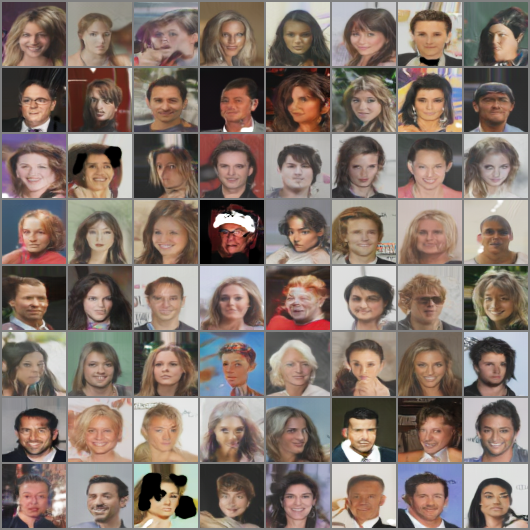}}\caption{CELEBA}
    \end{subfigure}
    \hspace{0.3cm}
    \begin{subfigure}{0.4\linewidth} {\includegraphics[width=1\linewidth]{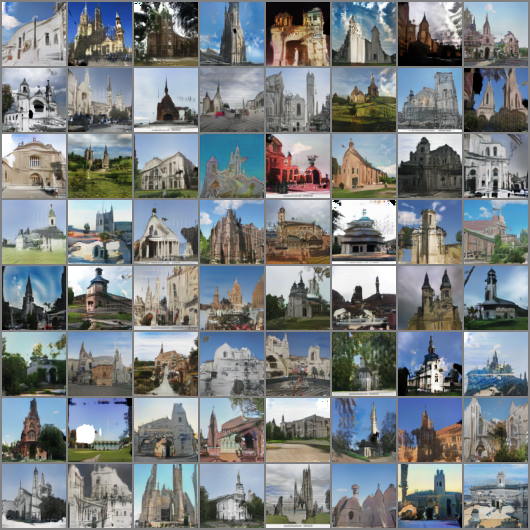}}\caption{LSUN Church}
    \end{subfigure} 
    \caption{Generation results on CELEBA and LSUN Church for unconditional IID-GAN.}
    \label{fig:cifar10-result}
\end{figure}

\section*{Contribution Statement}
\emph{Yang Li} and \emph{Liangliang Shi} contribute equally to this work. \emph{Junchi Yan} is the correspondence author. The work was partly supported by NSFC (62222607, U19B2035).

\bibliographystyle{named}
\bibliography{ijcai23}

\end{document}